\documentclass[accepted]{uai2026} 
                        
\usepackage[utf8]{inputenc}

\usepackage[american]{babel}

\usepackage{natbib} 
\usepackage{mathtools} 

\usepackage{amsfonts}
\usepackage{amssymb}
\usepackage{amsthm}
\usepackage{makecell}
\usepackage{graphicx}
\usepackage{subcaption}
\usepackage{placeins}
\usepackage{float}
\usepackage{comment}
\usepackage{cleveref}

\usepackage{tikz}
\usetikzlibrary{arrows.meta, positioning, shapes.geometric}
\usetikzlibrary{positioning}

\usepackage{booktabs} 
\usepackage{tikz} 

\newtheorem{theorem}{Theorem}[section]
\newtheorem{proposition}[theorem]{Proposition}
\newtheorem{lemma}[theorem]{Lemma}

\newtheorem{definition}[theorem]{Definition}
\newtheorem{assumption}[theorem]{Assumption}
\newtheorem{remark}[theorem]{Remark}
\newtheorem{example}[theorem]{Example}

\usepackage{multirow}

\newcommand{\GM}[1]{\textcolor{red}{\textbf{GM:} #1}}

\title{Conformal Graph Prediction with Z-Gromov-Wasserstein Distances}

\author[1,$\star$]{Gabriel Melo}
\author[1,$\star$]{Thibaut de Saivre}
\author[2]{Anna Calissano}
\author[1]{Florence d'Alch\'e-Buc}

\affil[1]{%
    LTCI, T\'el\'ecom Paris\\
    Intistut Polytechnique de Paris\\
    Palaiseau, France
}

\affil[2]{%
    Department of Statistical Science\\
    University College London\\
    London, UK
}

\affil[$\star$]{%
    Equal Contribution
}

\begin{document}
\maketitle
\begin{abstract}
Supervised graph prediction addresses regression problems where
the outputs are structured graphs. Although several approaches
exist for graph-valued prediction, principled uncertainty
quantification remains limited. We propose a conformal
prediction framework for graph-valued outputs, providing
distribution-free coverage guarantees in structured output spaces.
Our method defines nonconformity via the
Z-Gromov-Wasserstein distance, instantiated in practice
through Fused Gromov-Wasserstein (FGW),
enabling permutation invariant comparison between predicted
and candidate graphs.
To obtain adaptive prediction sets, we introduce Score Conformalized Quantile Regression (SCQR), an extension of Conformalized Quantile Regression (CQR) to handle complex output spaces such as graph-valued outputs. We evaluate the proposed approach on a synthetic task and a real problem of molecule identification.
\end{abstract}

\section{Introduction}

\begin{figure}
     \centering
     \includegraphics[width=0.95\linewidth]{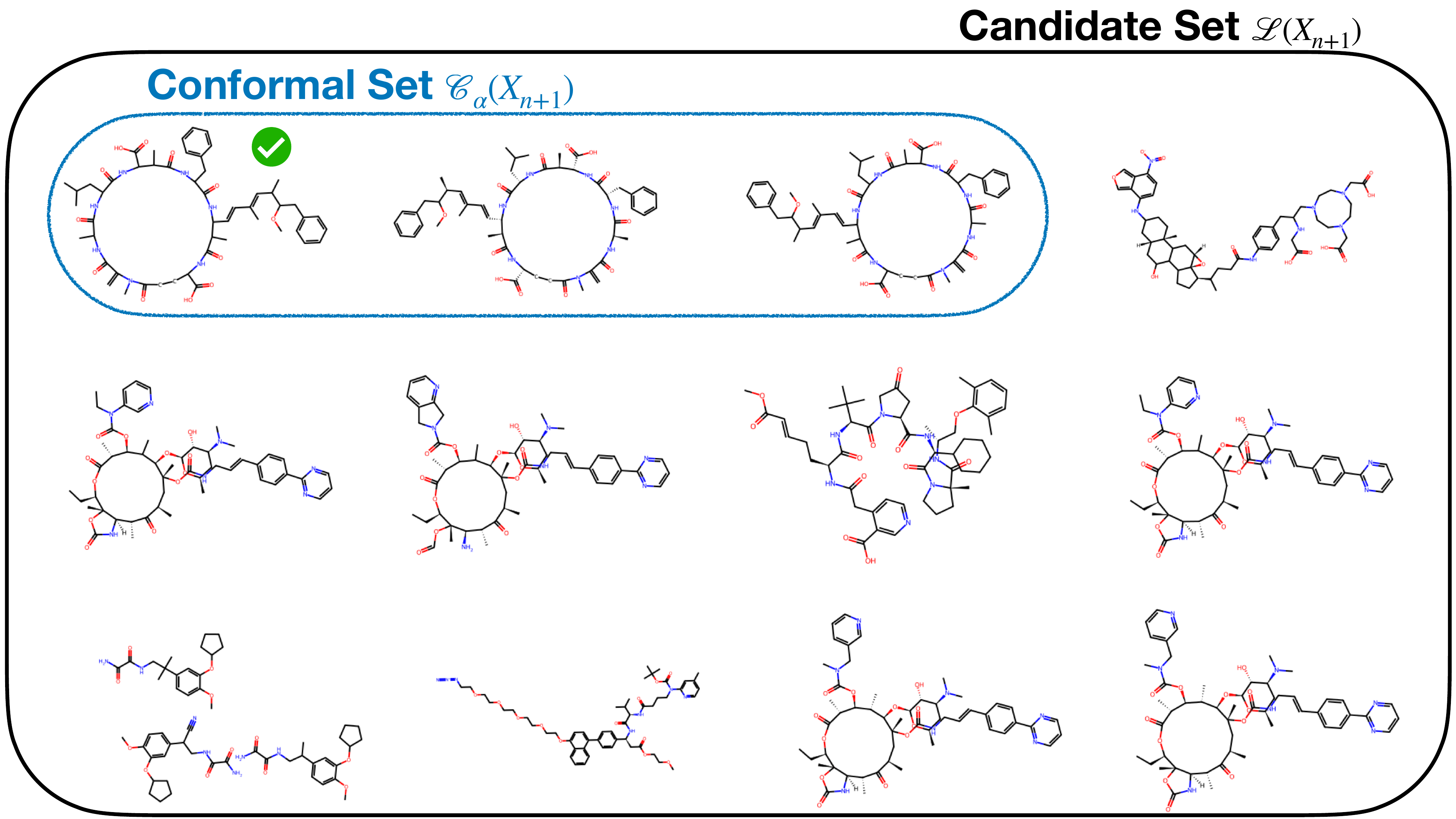}
     \caption{Conformal metabolite prediction set at $90\%$ marginal coverage for molecule identification. The green check-mark denotes the ground-truth.}
\label{fig:placeholder}
\end{figure}
Motivated by various applications such as molecular identification \citep{nguyen2019recent} in chemistry or scene understanding in computer vision \citep{shit2022relationformer}, Supervised Graph Prediction (SGP) has recently attracted a growing interest in Machine Learning. This task consists in learning a predictive model that maps an input variable of any modality (text, image, tabular or distributional) to a target graph of arbitrary size. Approaches in the literature leverage surrogate regression in graph representation spaces \citep{brouard2016fast} or end-to-end learning \citep{shit2022relationformer,yang2024exploiting, krzakala2024any2graph} that boils down to graph-valued regression \citep{calissano2022graph}. However, none of these provides confidence sets, posing risks for discovery or identification when experimental validation is costly.

Uncertainty quantification for graph-valued data aims to produce a set of plausible graphs rather than a single prediction. Although random graph models are well studied \citep{frieze2015introduction}, parametric approaches are often unrealistic in practice, especially for attributed graphs. Non-parametric methods therefore provide a more flexible alternative.

In this paper, we adopt Conformal Prediction (CP), an agnostic, distribution free, and post-training framework\footnote{The code is available at \url{https://github.com/gabrielmelo00/GraphConf}.}. CP \citep{shafer2008tutorial,fontana2023conformal} provides finite-sample coverage guarantees under minimal assumptions, requiring only exchangeability of the data, and can be applied on top of any pre-trained predictor without modifying training. 
Extending CP to graph-valued outputs raises several fundamental challenges. Even in the multivariate Euclidean setting, constructing informative conformal sets is known to be difficult due to the curse of dimensionality and the lack of natural orderings \citep{dheur2025unified,kondratyev2025neural,thurin2025optimal}. These challenges are amplified in the graph setting, where outputs live in a highly structured, non-Euclidean, and combinatorial space. 

For a graph with attributes on nodes and edges, uncertainty can revolve around the uncertainty at the global level, providing a conformal set made of a set of graphs with more or less different structure, or at a local level, measuring uncertainty at the level of the nodes and edges attribute. In this paper, we opt for a global framework. This choice is also supported by a specific focus on molecular identification tasks, where only a specific amount of configurations are possible.

A central challenge is therefore the choice of a suitable non-conformity score, which determines the structure and informativeness of the resulting conformal sets. Since graphs are defined up to node permutation, one must either work in the quotient space or use a permutation-invariant discrepancy. We adopt Optimal Transport distances, namely Gromov–Wasserstein and its variants, to define permutation-invariant scores and extend Conformal Prediction to graph-valued outputs with validity in the quotiented graph space.

We focus on Supervised Graph Prediction settings where each input is associated with a finite, input-dependent candidate set of graphs, as commonly assumed in structured output prediction and molecular identification.

Standard conformal prediction relies on a single global threshold, implicitly assuming homogeneous uncertainty across inputs. To account for input-dependent variability, we introduce \emph{Score Conformalized Quantile Regression} (SCQR), which calibrates conditional quantiles of the non-conformity score instead of a global cutoff. SCQR yields locally adaptive conformal sets while preserving marginal coverage guarantees.

Finally we empirically test our novel framework on a synthetic image-to-graph task and metabolite identification from mass spectra, assessed on Spectraverse, a recent benchmark in metabolomics \citep{gupta2026comprehensive}.

In short, our contributions include:
\begin{itemize}
\item a framework for Conformal Graph Prediction based on Z-Gromov-Wasserstein non-conformity scores and the proof of its validity in quotiented graph spaces;
\item a locally adaptive variant, Score Conformalized Quantile Regression
(SCQR), with proven marginal coverage; 
\item a set of numerical experiments showing the effectiveness and the versatility of the framework on a synthetic image-to-graph task and real molecule prediction problem.
\end{itemize}

\section{Related Work}
\paragraph{Prediction Models for Graphs.}
We consider supervised models whose outputs are graphs, including barycenter-based methods \citep{brogat2022learning,yang2024exploiting}, graph regression \citep{calissano2022graph}, deep end-to-end predictors \citep{krzakala2024any2graph,shit2022relationformer}, and graph-level autoencoders \citep{krzakala2025quest}. 
In molecular applications, graphs may also be predicted indirectly via SMILES representations \citep{zhang2025breaking}. 
Our framework is model-agnostic and applies to both direct graph predictors and SMILES-based pipelines. Unlike works that use graphs as input structures for node-, edge-, or graph-level prediction, our focus is on prediction problems whose outputs are themselves graphs.

\paragraph{Conformal Prediction for Graphs.}
Recent efforts have extended CP to non-Euclidean domains. For node-level tasks, \citet{huang2023uncertainty} introduced conformalized GNNs, establishing the importance of permutation invariance in base predictors, while \citep{lunde2025conformal} defined a node based CP sets. Related approaches construct conformal prediction sets for graph neural networks, including node classification and link prediction on fixed input graphs \citep{zargarbashi2023conformal,zargarbashi2024conformal}.
\citet{zhang2024conformal} proposed a general framework
for conformal structured prediction, constructing
structured prediction sets via implicit representations,
e.g., directed acyclic graphs for hierarchical labels. CP has also been studied for knowledge graph (KG) embeddings, turning its raw plausibility scores into entity or predicate answer sets that provably cover the true answer at a user-specified confidence level \citep{zhu2025conformalized,zhu2025predicate,zhu2025certainty}.

Closer to our setting, \citet{calissano2024conformal} developed CP sets for populations of unlabeled graphs, utilizing quotient spaces to handle the lack of node correspondence. Our work extends these concepts to graphs with categorical \textit{attributes} on nodes. In addition, by adopting the Z-Gromov-Wassertein distance, we work in a permutation invariant setting. Thus, whereas prior graph-related CP methods mainly use graphs as inputs or define sets over discrete entity/predicate completions of a KG, our setting requires calibrated sets in an unordered, attributed graph-valued output space.

\section{Background}
We begin by reviewing the two central components of our framework:
conformal prediction and the $Z$-Gromov--Wasserstein ($Z$-GW)
distance, a metric on metric measure spaces, under which graphs
naturally appear as an example of discrete metric measure spaces.

\paragraph{Notation}
Let $\Sigma_n := \{ a \in \mathbb{R}^n_{+} : \sum_{i=1}^n a_i = 1 \}$ denote the probability simplex,
and let $
\sigma_n := \{ P \in \{0,1\}^{n \times n} : P\mathbf{1}_n = \mathbf{1}_n,\; P^\top \mathbf{1}_n = \mathbf{1}_n \}
$
be the set of permutation matrices, where $\mathbf{1}_n \in \mathbb{R}^n$ denotes the all-ones vector.
For finite sets $\mathcal X=\{x_1,\dots,x_n\}$ and $\mathcal Y=\{y_1,\dots,y_m\}$, any discrete
probability measures $\mu\in\mathcal P(\mathcal X)$ and $\nu\in\mathcal P(\mathcal Y)$ can be written as
$\mu=\sum_{i=1}^n a_i \delta_{x_i}$ and $\nu=\sum_{j=1}^m b_j \delta_{y_j}$ with
$a\in\Sigma_n$ and $b\in\Sigma_m$. We identify admissible couplings $\pi\in\Pi(\mu,\nu)$ with
nonnegative matrices $\pi\in\mathbb R_+^{n\times m}$ satisfying $\pi\mathbf{1}_m = a$, $\pi^\top \mathbf{1}_n = b$.
When both measures are uniform, i.e. $\mu(\{x_i\})=\nu(\{y_j\})=\tfrac1n$ for all $i,j$ and $n=m$,
the set $\Pi(\mu,\mu)$ coincides with the Birkhoff polytope.
\subsection{Conformal Prediction}

\begin{definition}[Exchangeability]
A sequence of random variables $(Z_1, \dots, Z_{n+1})$ taking values in a measurable
space $\mathcal{Z}$ is said to be \emph{exchangeable} if, for any permutation
$\sigma$ of $\{1, \dots, n+1\}$ and for any $(z_1, \dots, z_{n+1}) \in \mathcal{Z}^{n+1}$, $
\mathbb{P}(Z_1 = z_1, \dots, Z_{n+1} = z_{n+1})
=
\mathbb{P}(Z_{\sigma(1)} = z_1, \dots, Z_{\sigma(n+1)} = z_{n+1}).
$
\end{definition}

Conformal Prediction (CP) provides finite-sample, distribution-free guarantees on the coverage of prediction sets.
In the regression setting, we observe exchangeable pairs $(X_i, Y_i)$ and have a pre-trained base predictor $f_\theta: \mathcal{X}\to\mathcal{Y}$. The framework relies on a \emph{nonconformity score} function $s: \mathcal{X \times \mathcal{Y}\to \mathbb{R}}$  that measures the discrepancy between the target $y$ and the prediction $f_\theta(x)$. For standard real-valued regression, a typical choice is the absolute residual:
\begin{equation}
s(x,y) = | y - f_\theta(x) |.
\end{equation}
Given a held-out calibration set $\mathcal{D}_{\text{cal}} = \{(X_i, Y_i)\}_{i=1}^n$, we compute the scores $R = \{R_1, \dots, R_n\}$ where $R_i = s(X_i, Y_i)$. We then compute the adjusted empirical quantile:
\begin{equation}
\hat{q}_{1-\alpha} = \text{Quantile}\left(\{R_i\}_{i=1}^n, \frac{\lceil (n+1)(1-\alpha) \rceil}{n}\right).
\end{equation}
For a new input $X_{n+1}$, the conformal prediction set is defined as:
\begin{equation}
\mathcal{C}(X_{n+1}) = \{ y \in \mathcal{Y} : s(X_{n+1}, y) \le \hat{q}_{1-\alpha} \}.
\end{equation}
By exchangeability of the calibration and test samples, the nonconformity scores
$\{R_i\}_{i=1}^{n+1}$ with $R_i = s(X_i, Y_i)$ form an exchangeable sequence.
Consequently, the rank of $R_{n+1}$ among $\{R_i\}_{i=1}^{n+1}$ is uniformly distributed,
which implies the marginal coverage guarantee \citep{vovk2005algorithmic}
\[
\mathbb{P}\big(Y_{n+1} \in \mathcal{C}(X_{n+1})\big) \ge 1 - \alpha,
\]
without any assumptions on the data distribution or the accuracy of the predictor.
We refer to \cite{angelopoulos2023conformal, shafer2008tutorial} for more details on conformal prediction. 

\paragraph{Conformalized Quantile Regression}
To extend standard conformal prediction to better handle heteroscedasticity, Conformalized Quantile Regression (CQR) \citep{romano2019conformalized} improves the conformal framework by leveraging quantile regression \citep{koenker1978regression} to construct input-adaptive prediction intervals. Instead of relying on a fixed, globally calibrated residual, CQR employs two base models, $\hat{\psi}_{\alpha/2}(x)$ and $\hat{\psi}_{1-\alpha/2}(x)$, trained to estimate the lower and upper $(\alpha/2, 1-\alpha/2)$ quantiles of the conditional distribution $Y|X=x$. The nonconformity score is then defined as $s(x, y) = \max\{\hat{\psi}_{\alpha/2}(x) - y, y - \hat{\psi}_{1-\alpha/2}(x)\}$, which measures the signed distance of the target to the boundaries of the predicted interval. By computing the $(1-\alpha)$-th quantile of these scores, $\hat{q}_{1-\alpha}$, on a calibration set, the resulting prediction intervals $\mathcal{C}(x) = [\hat{\psi}_{\alpha/2}(x) - \hat{q}_{1-\alpha}, \hat{\psi}_{1-\alpha/2}(x) + \hat{q}_{1-\alpha}]$ achieve valid marginal coverage while dynamically adjusting their width to account for local uncertainty in the data. 

\subsection{Z-Gromov Wasserstein Distance}
To measure discrepancies between structured outputs such as graphs,
which will later serve as the basis for conformal scoring,
we work in the space of $Z$-networks equipped with the
$Z$-Gromov-Wasserstein distance.

\begin{definition}[Metric Measure Spaces]
    A metric measure space (mm-space) is defined as triple $(\mathcal{X}, d_\mathcal{X}, \mu_\mathcal{X})$, where $ (\mathcal{X}, d_\mathcal{X})$ is a compact metric space and $\mu_\mathcal{X}$ is a Borel probability measure on $\mathcal{X}$ with $\mu_\mathcal{X}(\mathcal{X})=1$ and full support, $\mathrm{supp}(\mu_\mathcal{X}) = \mathcal{X}$.
\end{definition}
\vspace{-0.2cm}
Gromov--Wasserstein (GW) distances were introduced to compare mm-spaces via optimal transport \citep{memoli2011gromov}. Subsequent work in the machine learning literature proposed several
variants to handle structured and attributed data, most notably Fused
GW (FGW) \citep{titouan2019optimal}. The $Z$-Gromov-Wasserstein (Z-GW) distance \citep{bauer2025z} unifies these
approaches by replacing the metric-valued distance function in classical
GW with a general measurable pairwise relation taking values in a metric
space $(\mathcal{Z}, d_\mathcal{Z})$, so that classical GW, FGW and other formulations arise as particular instances of this construction.

\begin{definition}[$Z$-networks]
Let $(\mathcal{Z}, d_\mathcal{Z})$ be a metric space.
A \emph{$Z$-network} is a triple
$(\mathcal{X}, \omega_\mathcal{X}, \mu_\mathcal{X})$, where
$\mathcal{X}$ is a measurable space,
$\mu_\mathcal{X}$ is a probability measure on $\mathcal{X}$, and
$\omega_\mathcal{X}:\mathcal{X}\times\mathcal{X}\to\mathcal{Z}$ is a
measurable function encoding pairwise relational information.
Unlike mm-spaces, $\omega_\mathcal{X}$ is not required to be a metric.
\end{definition}

\paragraph{Z-Gromov--Wasserstein distance.}
Given two $Z$-networks
$(\mathcal{X}, \omega_\mathcal{X}, \mu_\mathcal{X})$ and
$(\mathcal{Y}, \omega_\mathcal{Y}, \mu_\mathcal{Y})$,
the $Z$-Gromov--Wasserstein $p$-distance ($p\ge1$) is defined as
\begin{equation}
\label{eq:zgw}
\begin{aligned}
\mathrm{GW}^Z_p(&\mathcal{X}, \mathcal{Y})
=
\frac{1}{2}
\inf_{\pi \in \Pi(\mu_\mathcal{X}, \mu_\mathcal{Y})}
\Bigg(
\iint_{(\mathcal{X}\times\mathcal{Y})^2}
 \\
&d_{\mathcal Z}\!\Big(
\omega_\mathcal{X}(x,x'),\omega_\mathcal{Y}(y,y')
\Big)^p
\,\mathrm{d}\pi(x,y)\,\mathrm{d}\pi(x',y')
\Bigg)^{1/p}.
\end{aligned}
\end{equation}
When $\mathcal{Z}=\mathbb{R}$ and $\omega_\mathcal{X}$ is the usual metric distance ($\omega_\mathcal{X} = d_\mathcal{X}$),
$\mathrm{GW}^Z_p$ reduces to the classical GW distance.

\begin{definition}[Weak isomorphism \citep{bauer2025z}]
\label{def:weak-iso}
Two $Z$-networks
$(\mathcal{X}, \omega_\mathcal{X}, \mu_\mathcal{X})$ and
$(\mathcal{Y}, \omega_\mathcal{Y}, \mu_\mathcal{Y})$
are said to be \emph{weakly isomorphic}, written
$\mathcal{X} \sim \mathcal{Y}$, if there exists a $Z$-network
$(\mathcal{W}, \omega_\mathcal{W}, \mu_\mathcal{W})$ and
measure-preserving maps
$\phi_\mathcal{X} : \mathcal{W} \to \mathcal{X}$ and
$\phi_\mathcal{Y} : \mathcal{W} \to \mathcal{Y}$
such that
\[
\omega_\mathcal{W}(w,w')
=
\omega_\mathcal{X}(\phi_\mathcal{X}(w),\phi_\mathcal{X}(w'))
=
\omega_\mathcal{Y}(\phi_\mathcal{Y}(w),\phi_\mathcal{Y}(w'))
\] for $\mu_\mathcal{W}\otimes\mu_\mathcal{W}~\text{for almost every pair}~ (w,w').$
\end{definition}

\paragraph{Weak isomorphism and metricity.}
The distance $\mathrm{GW}^Z_p$ defines a pseudometric on the space of
$Z$-networks.
To obtain a genuine metric, one considers equivalence classes under
\emph{weak isomorphism}. We denote by $\mathfrak{M}$ the collection of all $Z$-networks
$(\mathcal{X}, \omega_\mathcal{X}, \mu_\mathcal{X})$, and by
$\mathcal{M} := \mathfrak{M}/\!\sim$ the corresponding quotient space
under weak isomorphism.

\begin{theorem}[{\citep[Thm.~29]{bauer2025z}}]
For any separable metric space $(\mathcal{Z}, d_\mathcal{Z})$ and
$p \ge 1$, the distance $\mathrm{GW}^Z_p$ induces a genuine metric on
$\mathcal{M}$.
\end{theorem}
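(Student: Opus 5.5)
The plan is to verify the metric axioms on $\mathcal{M}$ in three stages: show that $\mathrm{GW}^Z_p$ is a pseudometric on $\mathfrak{M}$, show that it is constant on weak isomorphism classes so that it descends to $\mathcal{M}$, and show that vanishing distance forces weak isomorphism. Throughout I will tacitly restrict to $Z$-networks for which the integrals are finite, for example by imposing a $p$-th moment condition on $\omega$, as in \citet{bauer2025z}.

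\textbf{Pseudometric properties.} Nonnegativity is immediate. Symmetry follows by transposing a coupling: if $\pi\in\Pi(\mu_\mathcal{X},\mu_\mathcal{Y})$, its pushforward under $(x,y)\mapsto(y,x)$ lies in $\Pi(\mu_\mathcal{Y},\mu_\mathcal{X})$, and the symmetry of $d_\mathcal{Z}$ shows the cost is unchanged. For $\mathrm{GW}^Z_p(\mathcal{X},\mathcal{X})=0$, use the diagonal coupling $(\mathrm{id},\mathrm{id})_\#\mu_\mathcal{X}$.

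For the triangle inequality, take couplings $\pi_{12}$ and $\pi_{23}$ that are $\varepsilon$-optimal. Since $\mathcal{Z}$ is separable, the spaces involved can be treated as standard Borel, so the gluing lemma applies: disintegrate both couplings along the shared marginal on $\mathcal{Y}$ to obtain a measure $\gamma$ on $\mathcal{X}\times\mathcal{Y}\times\mathcal{W}$, and let $\pi_{13}$ be its projection onto $\mathcal{X}\times\mathcal{W}$. Pointwise,
\[
d_\mathcal{Z}(\omega_\mathcal{X}(x,x'),\omega_\mathcal{W}(w,w'))\le d_\mathcal{Z}(\omega_\mathcal{X}(x,x'),\omega_\mathcal{Y}(y,y'))+d_\mathcal{Z}(\omega_\mathcal{Y}(y,y'),\omega_\mathcal{W}(w,w')).
\]
Applying Minkowski's inequality in $L^p(\gamma\otimes\gamma)$ gives the bound for $\pi_{13}$, and letting $\varepsilon\to0$ finishes the argument.

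\textbf{Invariance under weak isomorphism.} Suppose $\phi:\mathcal{W}\to\mathcal{X}$ is measure preserving and $\omega_\mathcal{W}=\omega_\mathcal{X}\circ(\phi\times\phi)$ almost everywhere. Then the coupling $(\mathrm{id},\phi)_\#\mu_\mathcal{W}$ has zero cost, so $\mathrm{GW}^Z_p(\mathcal{W},\mathcal{X})=0$. If $\mathcal{X}\sim\mathcal{Y}$ via a common $\mathcal{W}$, the triangle inequality gives $\mathrm{GW}^Z_p(\mathcal{X},\mathcal{Y})=0$, and hence also $\mathrm{GW}^Z_p(\mathcal{X},\mathcal{V})=\mathrm{GW}^Z_p(\mathcal{Y},\mathcal{V})$ for every $\mathcal{V}$. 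Therefore $\mathrm{GW}^Z_p$ is well defined on $\mathcal{M}$ and is a pseudometric there.

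\textbf{Definiteness (the main obstacle).} Suppose $\mathrm{GW}^Z_p(\mathcal{X},\mathcal{Y})=0$. The first thing I would attempt is to show the infimum is attained by an optimal coupling $\pi^*$. The natural route is tightness of $\Pi(\mu_\mathcal{X},\mu_\mathcal{Y})$ together with lower semicontinuity of the cost. However, $\omega$ is only measurable and not continuous, so lower semicontinuity is not automatic. I would handle this by a Lusin-type argument: approximate $\omega_\mathcal{X}$ and $\omega_\mathcal{Y}$ by continuous functions off sets of small measure, using separability of $\mathcal{Z}$ and a standard Borel model of the spaces. One then shows that the cost functional is continuous on $\Pi$ with respect to a topology in which $\Pi$ is compact. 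If existence proves hard to establish directly, the fallback is to work on the level of the pushforward measures $(\omega\times\ldots)$ on $\mathcal{Z}^{\mathbb{N}\times\mathbb{N}}$, that is, distance-matrix-type invariants, where compactness is easier to obtain.

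Once a zero-cost coupling $\pi^*$ is in hand, set $\mathcal{W}=\mathcal{X}\times\mathcal{Y}$ with $\mu_\mathcal{W}=\pi^*$ and $\omega_\mathcal{W}((x,y),(x',y'))=\omega_\mathcal{X}(x,x')$. The two coordinate projections are measure preserving because $\pi^*$ has marginals $\mu_\mathcal{X}$ and $\mu_\mathcal{Y}$. Zero cost means $\omega_\mathcal{X}(x,x')=\omega_\mathcal{Y}(y,y')$ for $\pi^*\otimes\pi^*$-almost every pair, which is exactly the condition in Definition~\ref{def:weak-iso}. Hence $\mathcal{X}\sim\mathcal{Y}$.
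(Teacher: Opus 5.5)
The paper gives no proof of this theorem; it cites \citet{bauer2025z}. Your three-stage plan is the standard one, inherited from the measure-network case of Chowdhury--M\'emoli: gluing plus Minkowski for the pseudometric, the zero-cost coupling $(\mathrm{id},\phi)_\#\mu_\mathcal{W}$ for invariance, and an optimal coupling $\pi^*$ with the network on $\mathcal{X}\times\mathcal{Y}$ for definiteness. Your two implications also show that $\sim$ coincides with $\{\mathrm{GW}^Z_p=0\}$, which settles transitivity of $\sim$.

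However, two steps are not correctly justified. \textbf{First, separability of $\mathcal{Z}$ is used in the wrong place.} It says nothing about the underlying spaces $\mathcal{X},\mathcal{Y},\mathcal{W}$, so ``since $\mathcal{Z}$ is separable, the spaces can be treated as standard Borel'' is a non sequitur. Under the paper's definition (arbitrary measurable spaces), neither the gluing lemma nor tightness of $\Pi(\mu_\mathcal{X},\mu_\mathcal{Y})$ is guaranteed. Regularity of the underlying spaces (Polish, or standard Borel with a chosen Polish topology) must be a standing assumption, alongside your $p$-th moment condition; I believe this is the framework of the cited result. Separability of $\mathcal{Z}$ enters elsewhere. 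It gives $\mathcal{B}(\mathcal{Z}\times\mathcal{Z})=\mathcal{B}(\mathcal{Z})\otimes\mathcal{B}(\mathcal{Z})$, which is what makes $(x,y,x',y')\mapsto d_\mathcal{Z}(\omega_\mathcal{X}(x,x'),\omega_\mathcal{Y}(y,y'))$ measurable, so the defining integrals and your Minkowski step make sense. It also makes Lusin's theorem available for $\mathcal{Z}$-valued maps.

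\textbf{Second, existence of $\pi^*$ is the crux of definiteness, and you leave it as a hedged plan.} The Lusin route works, but the idea that makes it work is missing: every $\pi\in\Pi(\mu_\mathcal{X},\mu_\mathcal{Y})$ has the same marginals. So $\pi\otimes\pi$ projects to $\mu_\mathcal{X}\otimes\mu_\mathcal{X}$ on $(x,x')$ and to $\mu_\mathcal{Y}\otimes\mu_\mathcal{Y}$ on $(y,y')$. Let $K_\mathcal{X}\subseteq\mathcal{X}\times\mathcal{X}$ and $K_\mathcal{Y}\subseteq\mathcal{Y}\times\mathcal{Y}$ be closed Lusin sets of mass at least $1-\varepsilon$ on which $\omega_\mathcal{X},\omega_\mathcal{Y}$ are continuous. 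Then the closed set $F=\{(x,y,x',y'):(x,x')\in K_\mathcal{X},\,(y,y')\in K_\mathcal{Y}\}$, on which the cost $c=d_\mathcal{Z}(\omega_\mathcal{X},\omega_\mathcal{Y})^p$ is continuous, has complement of $\pi\otimes\pi$-mass below $2\varepsilon$ \emph{uniformly in $\pi$}.

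Likewise, $c\le 2^{p-1}(a+b)$ with $a=d_\mathcal{Z}(\omega_\mathcal{X},z_0)^p$ and $b=d_\mathcal{Z}(z_0,\omega_\mathcal{Y})^p$. The laws of $a$ and $b$ under $\pi\otimes\pi$ do not depend on $\pi$, so $c$ is uniformly integrable over $\Pi(\mu_\mathcal{X},\mu_\mathcal{Y})$. Now truncate at level $M$, extend $\min(c,M)|_F$ by Tietze to a bounded continuous function, and use $\pi_n\otimes\pi_n\to\pi\otimes\pi$ weakly. This gives continuity of $\pi\mapsto\int c\,d(\pi\otimes\pi)$ on $\Pi(\mu_\mathcal{X},\mu_\mathcal{Y})$, which is weakly compact by Prokhorov. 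So a minimizer exists, and it has zero cost when $\mathrm{GW}^Z_p(\mathcal{X},\mathcal{Y})=0$.

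Your fallback through laws of sampled $\omega$-matrices is not a shortcut. It would need a reconstruction theorem (equal sampling laws imply weak isomorphism) for non-metric kernels, which is deeper than this compactness argument.
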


\begin{remark}[Working with representatives in practice]
Although the quotient space $\mathcal{M} := \mathfrak{M}/\sim$ is the
natural mathematical domain on which $\mathrm{GW}^Z_p$ is well defined,
it is not directly accessible in practice.
Each element of $\mathcal{M}$ corresponds to an equivalence class of
isomorphic realizations, and any concrete representative in
$\mathfrak{M}$ is inherently arbitrary.
In practical settings, observed data are therefore provided as explicit
$Z$-networks in $\mathfrak{M}$ (e.g., graphs, attributed graphs, or
meshes), and learning algorithms operate on these
representations.
Consequently, both empirical datasets and prediction tasks are
formulated at the level of $\mathfrak{M}$, while permutation-invariant
losses such as $\mathrm{GW}^Z_p$ ensure consistency with the underlying
quotient structure.
\end{remark}
\section{Conformal Graph Prediction}
\label{sec:conformalgraph}

\subsection{Problem Setup}
\begin{figure}
    \centering
   \includegraphics[width=0.8\linewidth, trim=19 45 30 24, clip]{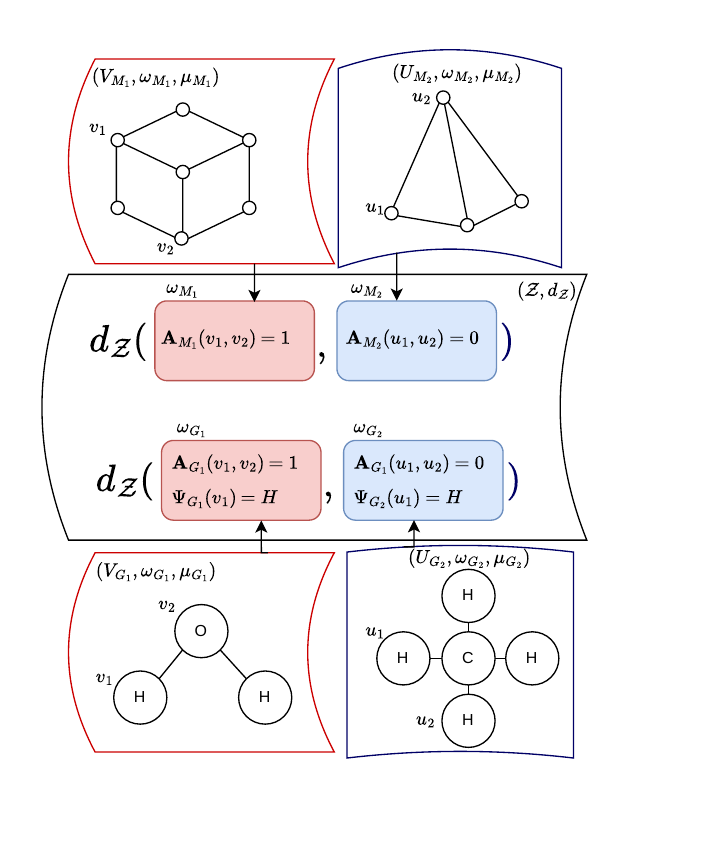}
   \vskip-0.4cm
    \caption{Z-GW distance}
    \label{fig:placeholder}
\end{figure}
\paragraph{Graphs as $Z$-networks.}
Graphs can be naturally viewed as finite $Z$-networks.
Let $G = (V,E,F)$ be a graph, where
$V$ is the node set with $|V| = n \in \mathbb{N}^*$,
$E \subseteq V \times V$ is the edge set, and
$\Psi : V \to \mathbb{R}^d$ is a node feature map, with $\Psi(i)$ denoting
the feature associated to node $i \in V$.
We associate to $G$ a $Z$-network $(V,\omega_G,\mu_G)$, where
$\mu_G \in \mathcal P(V)$ is a probability measure on $V$ (assumed uniform in this case), and
$\omega_G : V \times V \to \mathcal Z$ encodes pairwise relational
information between nodes.
This representation provides a unified way to encode both graph
structure and node attributes, enabling the comparison of graphs within
the general Z-GW distance.

\begin{example}[Fused Network Gromov-Wasserstein Distance] 
Choosing the pairwise structure $\omega_G$ such that it combines structural
and node-level information allows to retrieve the so-called Fused Network Gromov-Wasserstein Distance introduced by \citet{yang2024exploiting}:
\[
\omega_G(i,k)
=
\big(
\mathbf{A}(i,k),
\; \mathbf{X}(i,k),  \Psi(i)
\big)
\in \mathcal Z := \Omega \times \mathbb{R}^m \times \mathbb{R}^d,
\]
where the matrix $\mathbf{A} : V \times V \to \Omega$ defined from the set of edges $E$ encodes graph
connectivity (e.g. Adjacency, Shortest Path, Laplacian, etc), $\mathbf{X} : V \times V \to \mathbb R^m$ represents edge features, and $\Psi(i) \in \mathbb{R}^d$ denotes the feature associated to node $i$. For instance, when $\mathbf{A}$ is the adjacency matrix, $\Omega = \{0,1\}$. The product space $\mathcal Z$ is equipped with
a weighted $\ell_q$ metric
\begin{align}
    d_{\mathcal Z}&\big((a, x, \psi),(a',x',\psi')\big)
=
\Big(
\beta d_\Omega(a,a')^q\notag\\ 
&+
\gamma d_{\mathbb R^m}(x,x')^q
+
(1-\gamma -\beta) d_{\mathbb R^d}(\psi,\psi')^q
\Big)^{1/q},
\end{align}
for  $\gamma, \beta \in [0,1],\; q \ge 1$. \notag
\end{example}

\begin{example}[Fused Gromov-Wassertein Distance \citep{vayer2020fused}]
  Setting $\gamma=0$ and $\beta\in[0,1]$, we recover the Fused Gromov-Wasserstein (no more edge features). 
\end{example}

\begin{example}[Gromov-Wasserstein Distance \citep{memoli2011gromov}] To retrieve Gromov-Wasserstein distance between unlabeled graphs (no more feature nodes), we set $\gamma=0$ and $\beta=1$.
\end{example}

\paragraph{Discrete $Z$-Gromov--Wasserstein for graphs.}
Let $G_1=(V_1,\omega_1,\mu_1)$ and $G_2=(V_2,\omega_2,\mu_2)$ be finite
graphs with $|V_1|=n$ and $|V_2|=m$ and uniform measures.
The discrete $Z$-GW distance between $G_1$ and $G_2$ is defined as
\begin{equation}
    \min_{\pi\in\Pi(\mu_1,\mu_2)}
\left(
\sum_{i,k\in V_1}
\sum_{j,l\in V_2}
d_{\mathcal Z}\!\big(\omega_{1}(i,k),\omega_{2}(j,l)\big)^p
\,\pi_{ij}\pi_{kl}
\right)^{1/p}.
\end{equation}

In the finite graph setting with uniform measures, weak isomorphisms admit
a simple characterization (see Proposition \ref{prop:weak_iso_perm} below). As a consequence, $\mathrm{GW}^Z_p$ defines a permutation-invariant
distance on graphs. 

\begin{proposition}[Weak isomorphism and permutation invariance]
\label{prop:weak_iso_perm}
Let $(V_1,\omega_1,\mu_1)$ and $(V_2,\omega_2,\mu_2)$ be finite
$Z$-networks associated to $G_1$ and $G_2$, respectively, with $|V_1|=|V_2|=n$ and uniform measures
$\mu_1=\mu_2$.
Then $G_1\sim_n G_2$ if and only if there exists a permutation matrix
$P\in\sigma_n$ such that $\omega_1 = P^\top\omega_2 P$.
\end{proposition}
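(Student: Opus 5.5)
The plan is to prove both directions directly from Definition~\ref{def:weak-iso}. The key device is the coupling induced by a weak isomorphism, combined with Birkhoff's theorem. Throughout I identify a permutation matrix $P\in\sigma_n$ with a bijection $\sigma$ of $\{1,\dots,n\}$. With the convention $P_{ji}=1$ iff $j=\sigma(i)$, one has $(P^\top\omega_2P)(i,k)=\omega_2(\sigma(i),\sigma(k))$, reading $\omega_2$ as an $n\times n$ array with entries in $\mathcal Z$.

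\emph{Sufficiency.} Suppose $\omega_1=P^\top\omega_2P$. I take $\mathcal W=V_1$, $\omega_{\mathcal W}=\omega_1$, $\mu_{\mathcal W}=\mu_1$, $\phi_1=\mathrm{id}$ and $\phi_2=\sigma$. Since both measures are uniform, the bijection $\sigma$ pushes $\mu_1$ onto $\mu_2$, so both maps are measure preserving. The required identity $\omega_{\mathcal W}(w,w')=\omega_2(\sigma(w),\sigma(w'))$ then holds for every pair, which is exactly the hypothesis.

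\emph{Necessity.} Let $(\mathcal W,\omega_{\mathcal W},\mu_{\mathcal W})$, $\phi_1$ and $\phi_2$ witness $G_1\sim G_2$. I define $\pi:=(\phi_1,\phi_2)_\#\mu_{\mathcal W}$ on $V_1\times V_2$. Because both maps are measure preserving, $\pi\in\Pi(\mu_1,\mu_2)$, so $n\pi$ is a doubly stochastic matrix. The a.e.\ identity in Definition~\ref{def:weak-iso} transfers to $\pi\otimes\pi$. The set
\[
B=\{(w,w'):\omega_1(\phi_1 w,\phi_1 w')\neq\omega_2(\phi_2 w,\phi_2 w')\}
\]
is $\mu_{\mathcal W}^{\otimes2}$-null, and it contains the preimage under $(\phi_1,\phi_2)^{\otimes 2}$ of
\[
\{((i,j),(k,l)) : \omega_1(i,k)\neq\omega_2(j,l)\}.
\]
Hence this latter set is $\pi\otimes\pi$-null. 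Since $V_1\times V_2$ is finite, a null set contains no atom of positive mass. Therefore $\omega_1(i,k)=\omega_2(j,l)$ whenever $\pi_{ij}>0$ and $\pi_{kl}>0$. This includes the case $(k,l)=(i,j)$, which handles the diagonal entries of $\omega$.

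Next I apply Birkhoff--von Neumann to write $n\pi=\sum_r\lambda_rP_r$ with $\lambda_r>0$ and $P_r\in\sigma_n$. Any single $P=P_1$ in this decomposition satisfies $\operatorname{supp}P\subseteq\operatorname{supp}\pi$, so $\pi_{i\sigma(i)}>0$ for all $i$. Applying the previous step to the pairs $(i,\sigma(i))$ and $(k,\sigma(k))$ gives $\omega_1(i,k)=\omega_2(\sigma(i),\sigma(k))$ for all $i,k$, that is, $\omega_1=P^\top\omega_2P$.

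The main obstacle is the measure-theoretic transfer from ``$\mu_{\mathcal W}\otimes\mu_{\mathcal W}$-a.e.\ on an arbitrary $\mathcal W$'' to ``every pair in $\operatorname{supp}\pi$''. This needs measurability of the preimage sets and the fact that, in the finite uniform setting, every support point has positive mass. Once that is settled, the only remaining ingredient is the observation that a permutation can be extracted inside the support of a doubly stochastic matrix, which Birkhoff's theorem (or Hall's marriage theorem) provides. The argument uses the equality of sizes and the uniformity of the measures precisely to make $n\pi$ doubly stochastic; without these assumptions the statement fails, because weak isomorphism can merge twin nodes.
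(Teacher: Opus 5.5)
Your proof is correct, but the necessity direction takes a different route from the paper's.

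\textbf{The paper's route.} The paper assumes the witnessing network $\mathcal W$ is finite with full support, so that the pullback identity holds everywhere. It then picks an arbitrary section $s$ of $\phi_1$, sets $\varphi=\phi_2\circ s$, and asserts that $\varphi$ is measure preserving and hence bijective.

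\textbf{Your route.} You push $\mu_{\mathcal W}$ forward to the coupling $\pi=(\phi_1,\phi_2)_\#\mu_{\mathcal W}$. You show $\omega_1(i,k)=\omega_2(j,l)$ whenever $\pi_{ij}>0$ and $\pi_{kl}>0$. You then extract a permutation inside $\operatorname{supp}\pi$ via Birkhoff--von Neumann.

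\textbf{What your route buys.} First, it needs no finiteness or full-support assumption on $\mathcal W$, which the definition of weak isomorphism does not grant. Null parts of $\mathcal W$ simply disappear in $\pi$.

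Second, and more importantly, it actually justifies bijectivity. The paper's assertion fails for an arbitrary section. Take $n=2$, $\omega_1=\omega_2$ constant, and $\mathcal W=\{a,b,c,d\}$ uniform, with $\phi_1^{-1}(v_1)=\{a,b\}$, $\phi_1^{-1}(v_2)=\{c,d\}$, $\phi_2^{-1}(w_1)=\{a,c\}$ and $\phi_2^{-1}(w_2)=\{b,d\}$. The section $s(v_1)=a$, $s(v_2)=c$ gives $\varphi\equiv w_1$, which is not a bijection.

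Repairing the paper's argument requires choosing $s$ so that $\phi_2\circ s$ is a bijection. That is exactly a perfect matching in the bipartite graph $\{(i,j):\pi_{ij}>0\}$, which is your Birkhoff/Hall step. The section argument is shorter once a good section is in hand, but yours is the complete version.

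\textbf{A small convention point.} Your convention ($P_{ji}=1$ iff $j=\sigma(i)$) makes $P^\top\omega_2P$ come out correctly. The paper's choice $P_{ij}=\mathbb{I}_{\{w_j=\varphi(v_i)\}}$ actually yields $\omega_1=P\omega_2P^\top$. This is immaterial for the existence claim, since one can transpose $P$.
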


From now on, we specialize the spaces $\mathfrak M$ and $\mathcal M$ to
graphs.
We denote by $\mathfrak M$ the collection of all graphs with at most $N$
nodes, i.e. $\mathfrak M = \bigsqcup_{n=1}^N
\mathcal{G}_n$, in which $\mathcal{G}_n$ are the space of all graphs with $n$ nodes represented in a labeled form, and by $\mathcal M := \bigsqcup_{n=1}^N (\mathcal{G}_n/\!\sim_n)$, the
union of the quotient graph spaces of size $n$, where graphs are identified up to node
permutation. For simplicity, we note $\mathcal M = \mathfrak M /\!\sim$.

\paragraph{Graph-valued prediction.}
We consider a graph-valued prediction model \[
f_\theta:\mathcal X\to\mathfrak M
\] that maps inputs
$x\in\mathcal X$ (e.g., spectra, images, or other descriptors)
to graph-valued outputs $y\in\mathfrak M$. In the remainder, we assume that $f_\theta$ has been trained on a separate dataset $\mathcal{D}_{\text{train}}$  by minimizing a loss measuring discrepancy between predicted and
ground-truth graphs. This training set is independent of the calibration data $\mathcal{D}_{\text{val}}=\{(X_i,Y_i)\}_{i=1}^n$.
Crucially, such losses must be invariant to node relabeling; the
$\mathrm{GW}^Z_p$ distance, for instance, satisfies this requirement by construction. In the experimental part, we consider three off-the-shelves graph prediction models $f_\theta$: \textsc{Any2Graph}\citep{krzakala2024any2graph}, a general-purpose end-to-end graph predictor restricted to small-size graphs, and
\textsc{EmbCos}\citep{de2026small} and \textsc{MSAlign}\citep{krzakala2026msalign}, models specialized on molecule
identification from mass spectra.

\subsection{General Framework for Conformal Graph prediction}

Given the previous set up, for a given input $x$, we get a concrete graph $\hat y = f_\theta(x) \in \mathfrak M$. 
Uncertainty guarantees, however, must be invariant to node relabeling and are
therefore stated on the quotient space $\mathcal M$.

\begin{assumption}[Exchangeable data]
The observed data $(X_i, Y_i)_{i=1}^{n+1}$ are exchangeable and take values in
$\mathcal X \times \mathfrak M$.
\end{assumption}

\begin{definition}[Canonical projection]
\label{def:can_proj}
The canonical projection
$h : \mathfrak M \to \mathcal M := \mathfrak M / \sim$
maps a graph $y \in \mathfrak M$ to its equivalence class
\[
h(y) := [y]
= \{y' \in \mathfrak M \mid y' \sim y \},
\]
corresponding to all node relabelings of the same underlying graph.
\end{definition}

Given a graph predictor $f_\theta$ and representative-valued data in
$\mathfrak M$, we define the nonconformity score
\[
s:\mathcal X\times\mathfrak M\to\mathbb R_+,
\qquad
s(x,y)
:= \mathrm{GW}^Z_p\!\big(f_\theta(x),\,y\big).
\]

Since $\mathrm{GW}^Z_p$ is invariant under node permutation, the score
$s$ is invariant under the equivalence relation $\sim$.
As a consequence, $s$ factors through the canonical projection $h$ and
induces a well-defined score
\[
\tilde s:\mathcal X\times\mathcal M\to\mathbb R_+,
\qquad
\tilde s(x,\tilde y)=s(x, y),
\quad \tilde y=h(y),
\]
which depends only $\tilde y \in \mathcal M$.

\begin{lemma}[Exchangeability is preserved under quotient maps]
\label{lem:exchangeability_preserved}
Let $(X_i,Y_i)_{i=1}^n$ be an exchangeable sequence of random variables
taking values in $\mathcal X \times \mathfrak M$.
Let $h:\mathfrak M\to\mathcal M:=\mathfrak M/\sim$ be the canonical projection
onto equivalence classes, and define $\tilde Y_i := h(Y_i)$.
Then the induced sequence $(X_i,\tilde Y_i)_{i=1}^n$ taking values in
$\mathcal X \times \mathcal M$ is exchangeable.
\end{lemma}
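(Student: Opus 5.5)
The plan is to show that exchangeability is preserved under any fixed measurable map applied coordinatewise. The canonical projection $h$ of Definition~\ref{def:can_proj} is one such map, so the lemma follows directly.

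Set $g := \mathrm{id}_{\mathcal X}\times h : \mathcal X\times\mathfrak M\to\mathcal X\times\mathcal M$, so that $(X_i,\tilde Y_i)=g(X_i,Y_i)$. First I would equip $\mathcal M$ with the quotient $\sigma$-algebra, i.e. the largest $\sigma$-algebra making $h$ measurable:
\[
\{B\subseteq\mathcal M : h^{-1}(B)\ \text{measurable in}\ \mathfrak M\}.
\]
With this choice $g$ is measurable by construction and the $\tilde Y_i$ are genuine random variables. Since $\mathfrak M=\bigsqcup_{n\le N}\mathcal G_n$ and each class is a finite orbit under $\sigma_n$, this choice is also the natural one. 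The paper's definition of exchangeability is written with point probabilities. I would state and prove the lemma in the measure-theoretic form: $(Z_1,\dots,Z_{n})\overset{d}{=}(Z_{\sigma(1)},\dots,Z_{\sigma(n)})$ for every permutation $\sigma$. This form contains the pointwise version as a special case (take singleton sets).

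The core step is as follows. Fix a permutation $\sigma$ of $\{1,\dots,n\}$ and measurable sets $B_1,\dots,B_n\subseteq\mathcal X\times\mathcal M$. Then
\[
\mathbb P\big(g(Z_1)\in B_1,\dots,g(Z_n)\in B_n\big)
=\mathbb P\big(Z_1\in g^{-1}(B_1),\dots,Z_n\in g^{-1}(B_n)\big),
\]
where $Z_i=(X_i,Y_i)$. Each $g^{-1}(B_i)$ is measurable in $\mathcal X\times\mathfrak M$. Exchangeability of $(Z_i)$ therefore lets us replace $Z_i$ by $Z_{\sigma(i)}$ on the right-hand side. Pulling back through $g$ again gives the same identity for $(g(Z_{\sigma(i)}))_i$. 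Product sets generate the product $\sigma$-algebra and form a $\pi$-system, so the two joint laws agree on all measurable sets. Equivalently, the joint law of $(g(Z_{\sigma(i)}))_i$ is the pushforward of the law of $(Z_{\sigma(i)})_i$ under $g^{\otimes n}$. That pushforward equals the pushforward of the law of $(Z_i)_i$, because the two underlying laws coincide.

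The only real obstacle is measurability of $h$ and well-definedness of the target $\sigma$-algebra; the probabilistic argument itself is routine. I would dispose of the obstacle with the quotient $\sigma$-algebra as above. In the concrete finite-graph setting, $h$ is also a map between spaces that are countable unions of standard Borel pieces, modulo finite group actions, so no pathology arises. I would remark that the argument uses nothing specific to $h$ beyond measurability. In particular, permutation invariance of $\mathrm{GW}^Z_p$ is not needed here; it only enters later, when the score $s$ is shown to factor as $\tilde s\circ g$.
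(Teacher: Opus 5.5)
Your proof is correct and follows the paper's own argument: the paper also sets $g(x,y)=(x,h(y))$, notes that it is measurable, and pushes the equal joint laws of $(Z_i)_i$ and $(Z_{\sigma(i)})_i$ forward under $g$ applied componentwise, via Lemma~\ref{lem:measurable_preserve_law}. Your explicit choice of the quotient $\sigma$-algebra on $\mathcal M$ and your $\pi$-system step make precise what the paper takes for granted here; it only invokes the quotient $\sigma$-algebra later, in the proof of Proposition~\ref{prop:conformal_validity_quotient}.
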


Lemma~\ref{lem:exchangeability_preserved} ensures that the exchangeability of the
observed data is preserved when passing from representative-valued outputs in
$\mathfrak M$ to their equivalence classes in the quotient space $\mathcal M$.

We now establish conformal validity for $Z$-GW–based conformal sets.
Let $R_i = s(X_i,Y_i)$ be the calibration scores and let $\hat q_{1-\alpha}$ denote their empirical $(1-\alpha)$-quantile.
For a new input $X_{n+1}$, define
\begin{equation}
\label{eq:fgw-ball}
\mathcal{C}_\alpha^{\mathfrak M}(X_{n+1})
=
\Big\{
y \in \mathfrak{M} :
\mathrm{GW}^Z_p\big(f_\theta(X_{n+1}), y\big) \le \hat q_{1-\alpha}
\Big\}.
\end{equation} 

\begin{proposition}[Conformal validity on $\mathfrak M$ and $\mathcal M$]
\label{prop:conformal_validity_quotient}
Let $(X_i,Y_i)_{i=1}^{n+1}$ be exchangeable random variables taking values in
$\mathcal X\times\mathfrak M$, and let $f_\theta:\mathcal X\to\mathfrak M$ be a
predictor.
Define the nonconformity score
\[
s(x,y)
:= \mathrm{GW}^Z_p\!\big(f_\theta(x),\,y\big),
\]
and construct the conformal prediction set
$\mathcal C^{\mathfrak M}_\alpha(x)\subseteq\mathfrak M$ at level $\alpha\in(0,1)$ using
$s$.
Then the following hold:
\begin{enumerate}
\item (\emph{Marginal coverage})
\[
\mathbb P\!\left( Y_{n+1}\in \mathcal C^{\mathfrak M}_\alpha(X_{n+1})\right)\;\ge\;1-\alpha.
\]
\item (\emph{Well-defined on Quotient})
$\mathcal  C^{\mathfrak M}_\alpha(x)$ is a union of equivalence classes and therefore induces
a well-defined prediction set on the quotient space, $\mathcal C^{\mathcal M}_\alpha(x)
:= h\!\left(\mathcal C^{\mathfrak M}_\alpha(x)\right)\subseteq\mathcal M$,
which satisfies
\[
\mathbb P\!\left(h( Y_{n+1})\in \mathcal  C^{\mathcal M}_\alpha(X_{n+1})\right)\;\ge\;1-\alpha.
\]
\end{enumerate}
\end{proposition}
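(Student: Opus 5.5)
The plan is to prove part 1 with the standard split-conformal rank argument, and then get part 2 from the permutation invariance of $\mathrm{GW}^Z_p$ given by Proposition~\ref{prop:weak_iso_perm}.

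\textbf{Part 1.} Set $R_i := s(X_i,Y_i)$ for $i=1,\dots,n+1$. The predictor $f_\theta$ was trained on $\mathcal D_{\text{train}}$, which is independent of the calibration and test data. Hence $s$ is a fixed measurable function, applied coordinatewise to an exchangeable sequence, and $(R_1,\dots,R_{n+1})$ is exchangeable. By the definition of the set in~\eqref{eq:fgw-ball},
\[
Y_{n+1}\in\mathcal C^{\mathfrak M}_\alpha(X_{n+1}) \iff R_{n+1}\le \hat q_{1-\alpha},
\]
where $\hat q_{1-\alpha}$ is the $\lceil (n+1)(1-\alpha)\rceil$-th smallest of $R_1,\dots,R_n$ (with $+\infty$ if this index exceeds $n$). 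We then use the standard fact that $R_{n+1}\le \hat q_{1-\alpha}$ holds exactly when the rank of $R_{n+1}$ among all $n+1$ scores is at most $\lceil (n+1)(1-\alpha)\rceil$. Under exchangeability this rank is stochastically dominated by a uniform variable on $\{1,\dots,n+1\}$, with ties broken favorably. This gives probability at least $\lceil (n+1)(1-\alpha)\rceil/(n+1)\ge 1-\alpha$.

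\textbf{Part 2.} First we show that $\mathcal C^{\mathfrak M}_\alpha(x)$ is saturated for $\sim$. Let $y\in\mathcal C^{\mathfrak M}_\alpha(x)$ and $y'\sim y$. By Proposition~\ref{prop:weak_iso_perm}, $\omega_{y'}=P^\top\omega_y P$ for some permutation matrix $P$. Composing any coupling $\pi\in\Pi(\mu_{f_\theta(x)},\mu_y)$ with $P$ gives a bijection onto $\Pi(\mu_{f_\theta(x)},\mu_{y'})$, since the measures are uniform and preserved by $P$. This bijection leaves the objective in~\eqref{eq:zgw} unchanged, so $\mathrm{GW}^Z_p(f_\theta(x),y')=\mathrm{GW}^Z_p(f_\theta(x),y)$. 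Hence $y'\in\mathcal C^{\mathfrak M}_\alpha(x)$. This is the factorization $s=\tilde s\circ(\mathrm{id}\times h)$ already noted before the statement.

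Saturation gives $\mathcal C^{\mathfrak M}_\alpha(x)=h^{-1}\bigl(h(\mathcal C^{\mathfrak M}_\alpha(x))\bigr)$. Therefore the events $\{h(Y_{n+1})\in\mathcal C^{\mathcal M}_\alpha(X_{n+1})\}$ and $\{Y_{n+1}\in\mathcal C^{\mathfrak M}_\alpha(X_{n+1})\}$ coincide, and the bound follows from Part 1. As an alternative, one could rerun the Part 1 argument on $(X_i,\tilde Y_i)$, which is exchangeable by Lemma~\ref{lem:exchangeability_preserved}, using the score $\tilde s$. Either route works.

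The only delicate point is the invariance step. One must check that relabeling nodes maps couplings bijectively onto couplings and leaves the double-sum objective unchanged; with uniform marginals this is a direct reindexing of the sum. A secondary technicality is measurability of $s$ and of $h$, which is needed so that the events above are well defined. We will take this as given, since $\mathfrak M$ is a finite union of finite-dimensional spaces and $\mathrm{GW}^Z_p$ is continuous there.
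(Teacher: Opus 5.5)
Your proposal is correct and follows essentially the same route as the paper. Part 1 is exchangeability of the scores $s(X_i,Y_i)$ plus the standard split-conformal rank argument. For Part 2, permutation invariance of $\mathrm{GW}^Z_p$ makes $\mathcal C^{\mathfrak M}_\alpha(x)$ a union of equivalence classes, so $h^{-1}\bigl(\mathcal C^{\mathcal M}_\alpha(x)\bigr)=\mathcal C^{\mathfrak M}_\alpha(x)$ and the two coverage events coincide; your explicit coupling-reindexing check of that invariance, and your spelled-out rank argument, are details the paper simply asserts or cites.
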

All proofs are deferred to Appendix~\ref{app:theory}.
\subsection{Practical Restrictions}

The conformal set defined in Eq.\ref{eq:fgw-ball} is an implicit subset of the graph space $\mathfrak{M}$, specified through a membership predicate rather than explicit enumeration. For any candidate graph $y \in \mathfrak{M}$, membership is determined by evaluating the nonconformity score $s(x, y)$ and comparing it to the calibrated threshold. In many graph domains such as molecular data, however, $\mathfrak{M}$ is combinatorially large, making the explicit materialization of $\mathcal{C}(x)$ computationally infeasible.

To obtain a finite and tractable prediction set, we intersect the implicit conformal set with an input-dependent candidate library $\mathcal{L}(x) \subset \mathfrak{M}$ (e.g., a metabolite database determined by a mass spectrum):
\begin{equation}
\label{eq:candidate}
\mathcal{C}_{\mathcal{L}}(x) = \mathcal{C}(x) \cap \mathcal{L}(x) = \{y \in \mathcal{L}(x) : s(x, y) \leq \hat{q}_{1-\alpha}\}.
\end{equation}
In retrieval tasks, the candidate library is typically constructed to be complete, in the sense that the ground-truth output is contained almost surely, i.e., $\mathbb{P}(Y_{n+1} \in \mathcal{L}(X_{n+1} )) = 1$.
Under this assumption, $\mathcal{C}_{\mathcal{L}}(x)$ coincides with the exact conformal prediction set restricted to the reduced output space $\mathcal{L}(x)$ and therefore inherits the marginal coverage guarantee.

\begin{remark}
\label{rem:incomplete}
If the ground-truth output $Y_{n+1}$ may lie outside $\mathcal{L}(X)$, the coverage
degrades according to the probability of library incompleteness: $
\mathbb{P}(Y_{n+1}  \in \mathcal{C}_{\mathcal{L}}(X_{n+1} ))
\;\geq\;
(1 - \alpha) - \mathbb{P}(Y_{n+1}  \notin \mathcal{L}(X_{n+1} ))$.
\end{remark}

\section{Locally Adaptive Graph Conformal Prediction}
\label{sec:scqr}

Previously, we extended conformal prediction to graph-valued outputs using a single global threshold on a nonconformity score. However, a global cutoff implicitly assumes homogeneous uncertainty across inputs. In practice, this assumption is rarely satisfied: some instances are intrinsically easy, while others are highly ambiguous. A single threshold therefore tends to produce overly conservative sets for easy inputs and risks undercoverage for difficult ones. 

In complex, high-dimensional, or structured output spaces (e.g.\ graphs, manifolds, or functional data), applying standard CQR directly in the output space $\mathfrak{M}$ is often computationally or conceptually intractable, since conditional quantiles are not naturally defined for structured objects. To provide distribution-free \emph{locally adaptive} guarantees in these settings, we introduce \emph{Score Conformalized Quantile Regression} (SCQR).

\paragraph{Score Conformalized Quantile Regression}
Given a point predictor $f_\theta: \mathcal{X} \to \mathfrak{M}$ and a nonconformity score $s(x, y) \in \mathbb{R}$, with $x\in\mathcal{X}$ and $y\in\mathfrak{M}$. SCQR relaxes the Graph conformal prediction one-size-fits-all assumption by allowing the threshold to depend on input-dependent attributes $\omega(x)\in \Omega$, which capture heteroscedasticity. These attributes may be the input itself (identity map) or derived quantities reflecting its complexity, e.g. size of candidate set or some model embeddings.

Concretely, we first train a quantile regression model $\psi: \Omega\to \mathbb{R}$ using the pinball loss to estimate $(1-\alpha)$-conditional quantile of $s(x,y)$ on  a training set. We again compute the adaptive residuals $E_i = s(X_i, Y_i) - \psi(\omega(X_i))$. We then compute the empirical quantile of the residuals: $\hat{q}_{1-\alpha} = \text{Quantile}\left(\{E_i\}_{i=1}^n, \frac{\lceil (n+1)(1-\alpha) \rceil}{n}\right)$. For a new input $X_{n+1}$, the conformal prediction set is:
 \begin{equation}
 \label{eq:scqr_set}
     \mathcal{C}(X_{n+1}) = \{ y \in \mathfrak{M} : s(X_{n+1}, y) \le \psi(\omega(X_{n+1})) + \hat{q}_{1-\alpha}\}.
 \end{equation}

 The residuals $E_i$ may take both negative and positive values. Since $s(x,y)\ge 0$ and $s(x,y)=0$ indicates perfect conformity, negative residuals correspond to examples that are easier than predicted given $\omega(x)$, while positive residuals indicate an underestimation of difficulty. Retaining both is essential for adaptive and non-conservative conformal calibration.

\begin{proposition}[Marginal Coverage Guarantee of SCQR]
\label{prop:scqr-validity}
Let $\{(X_i, Y_i)\}_{i=1}^{n+1}$ be exchangeable random variables, and let 
$\omega:\mathcal{X}\to\Omega$ be a deterministic feature map.
Assume that the nonconformity score $s:\mathcal{X}\times\mathfrak{M}\to\mathbb{R}$
and the quantile regression function $\psi:\Omega\to\mathbb{R}$
are fixed measurable functions, independent of the calibration sample.
Then the SCQR prediction set expressed in Eq. \ref{eq:scqr_set} satisfies
\begin{equation}
\mathbb{P}\!\left( Y_{n+1} \in \mathcal{C}(X_{n+1}) \right) \ge 1 - \alpha .
\end{equation}
\end{proposition}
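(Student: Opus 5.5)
The plan is to reduce SCQR to standard split conformal prediction with a modified nonconformity score. I define the adaptive score
\[
e(x,y) := s(x,y) - \psi(\omega(x)),
\]
which is a fixed measurable function of $(x,y)$. This holds because $s$, $\psi$ and $\omega$ are all deterministic and independent of the calibration sample. In particular, $\psi$ is trained on a separate training set.

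With this definition, the event $Y_{n+1}\in\mathcal C(X_{n+1})$ in Eq.~\ref{eq:scqr_set} is exactly the event $E_{n+1}\le \hat q_{1-\alpha}$, where $E_i := e(X_i,Y_i)$ for $i=1,\dots,n+1$. So it suffices to show
\[
\mathbb P\big(E_{n+1}\le \hat q_{1-\alpha}\big)\ge 1-\alpha .
\]

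The first step is to transfer exchangeability to the residuals. Since $e$ is applied coordinatewise by the same fixed map, for any permutation $\sigma$ the vector $(E_{\sigma(1)},\dots,E_{\sigma(n+1)})$ is the image under $e^{\otimes(n+1)}$ of the permuted data. That permuted data has the same law as the original by exchangeability. Hence $(E_1,\dots,E_{n+1})$ is exchangeable, by the same pushforward argument as in Lemma~\ref{lem:exchangeability_preserved}. Negative values of $E_i$ cause no difficulty, because only order statistics are used.

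The second step is the standard quantile lemma. Set $k=\lceil (n+1)(1-\alpha)\rceil$; then $\hat q_{1-\alpha}$ is the $k$-th smallest of $E_1,\dots,E_n$. If $k>n$, then $\hat q_{1-\alpha}=+\infty$ by the usual convention, and coverage is trivial.

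Otherwise, $E_{n+1}\le \hat q_{1-\alpha}$ holds if and only if $E_{n+1}$ is among the $k$ smallest values of the full collection $\{E_1,\dots,E_{n+1}\}$. The key point is that $E_{n+1}\le E_{(k)}^{(1:n)}$ if and only if the rank of $E_{n+1}$ in the augmented sample is at most $k$, with ties counted favorably. By exchangeability, the rank of $E_{n+1}$ is uniform on $\{1,\dots,n+1\}$ when there are no ties, and stochastically no larger when there are ties. Therefore
\[
\mathbb P\big(E_{n+1}\le \hat q_{1-\alpha}\big)\ge \frac{k}{n+1}\ge 1-\alpha .
\]

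The main obstacle is the handling of ties, which can occur when the score distribution has atoms; this is plausible for discrete graph candidates. I would handle it by the standard device of bounding $\mathbb P\big(E_{n+1}> E_{(k)}^{(1:n)}\big)$. This event means that $E_{n+1}$ is strictly among the top $n+1-k$ values. By symmetry, at most $n+1-k$ indices can satisfy the analogous strict inequality, so summing over indices gives probability at most $(n+1-k)/(n+1)\le\alpha$.

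Alternatively, I would simply invoke the marginal coverage of Proposition~\ref{prop:conformal_validity_quotient}(1), whose proof relies only on exchangeability of a fixed score, applied with the score $e$ in place of $s$.
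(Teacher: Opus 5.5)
Your proof is correct and follows essentially the same route as the paper: you reduce to the residuals $E_i = s(X_i,Y_i)-\psi(\omega(X_i))$, transfer exchangeability by the pushforward argument, and apply the standard rank/quantile lemma to get probability at least $\lceil (n+1)(1-\alpha)\rceil/(n+1)\ge 1-\alpha$. Your counting argument for ties (at most $n+1-k$ indices can strictly exceed the $k$-th smallest of the others) spells out explicitly a step the paper only asserts, after its trick of replacing $+\infty$ by $E_{n+1}$ in the empirical quantile.
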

\begin{remark}[One-Sided Nature of SCQR]Unlike the original CQR for real-valued regression which often produces two-sided intervals $[q_{low}, q_{high}]$, SCQR is inherently one-sided. This is because nonconformity scores are typically designed to be non-negative, where smaller values represent better fits.\end{remark}

\section{Numerical Experiments}
To illustrate the general applicability of our approach,
we consider both a synthetic graph prediction benchmark and
a real-world metabolite retrieval task.  We refer to Appendix \ref{app:exp} for more details on implementations and extra results.

\subsection{Synthetic Dataset}
We use the synthetic Coloring dataset \citep{krzakala2024any2graph},
consisting of pairs $(X_i, Y_i)$ where $X_i$ is an image encoding a
graph-coloring instance and $Y_i$ is its ground-truth graph. Nodes take one of four discrete colors
(blue, green, yellow, red) and edges indicate spatial neighboring.
The task is to recover the underlying graph from the image,
providing a controlled benchmark for image-to-graph prediction.

\paragraph{Candidate sets.}
For each test instance $(X_i, Y_i)$, we construct the candidate set $\mathcal{L}(x)$ by selecting all graphs in the test split that share the same node-type configuration as $Y_i$, i.e., the same number of blue, green, yellow, and red nodes. This yields a controlled identification problem where
structurally distinct graphs share identical node statistics.

\begin{figure}[H]
    \centering
 \includegraphics[width=1\linewidth,trim={0.9cm 0.8cm 0.9cm 0.9cm}, clip]{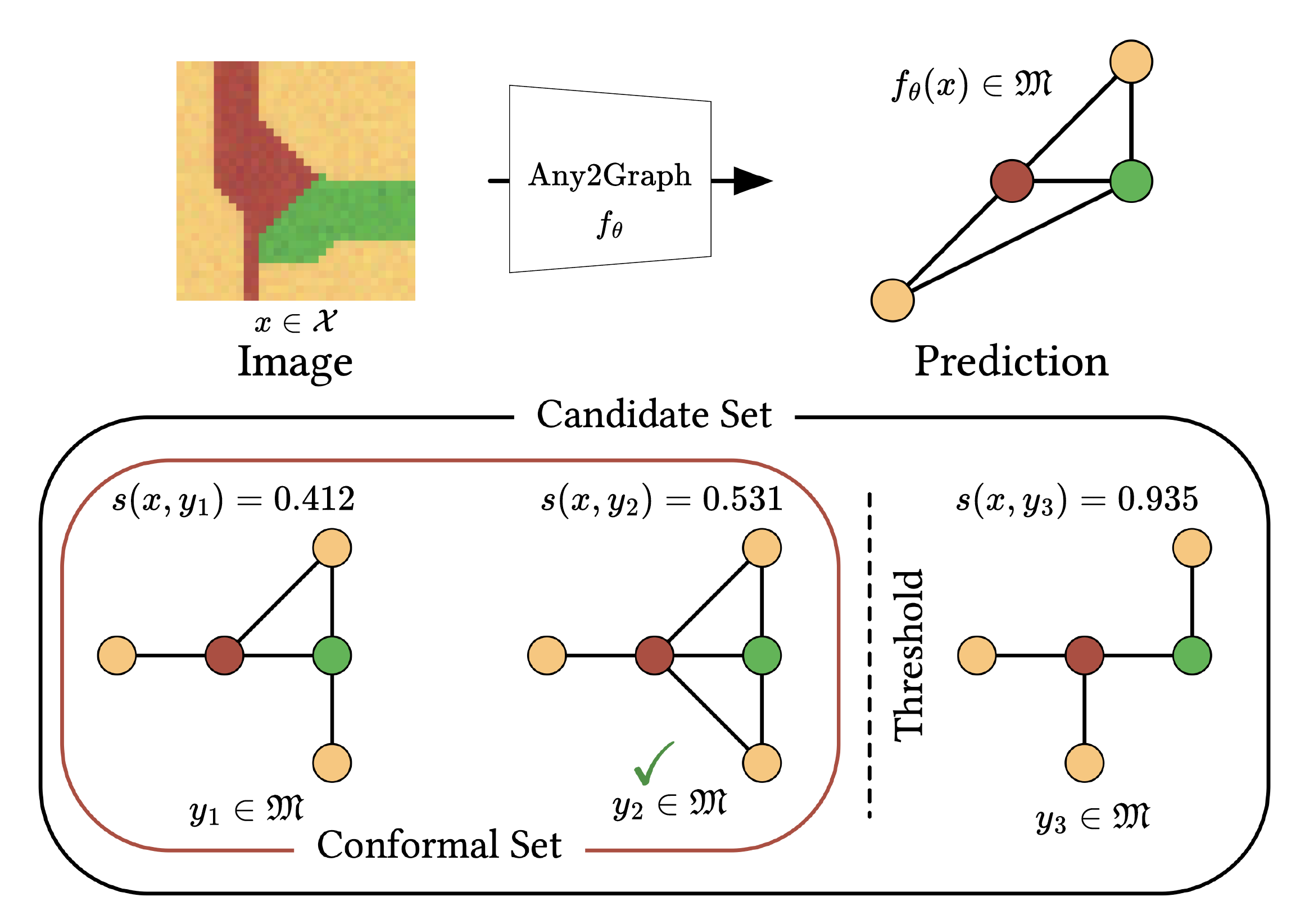}
   \vspace{-0.4cm}
    \caption{Example of the retrieval on the Coloring dataset. The green check-mark denotes the ground-truth.}
    \label{fig:conformal_example}
\end{figure}
\vspace{-0.6cm}

Figure~\ref{fig:conformal_example} illustrates the conformal prediction mechanism 
on a single Coloring instance. \textsc{Any2Graph} predicts a graph that does not 
exactly match the ground truth, yet the conformal set, constructed by thresholding 
FGW distances to all candidates, correctly contains it. The threshold, calibrated on 
held-out data, automatically adjusts to the predictor's error level. Candidates 
with FGW distance above the threshold are excluded, while the ground truth, being structurally close 
enough, remains inside the conformal set.

\paragraph{Model and protocol.}
We use \textsc{Any2Graph} \citep{krzakala2024any2graph} as graph
predictor $f_\theta$ (82\% test accuracy), trained with the
PMFGW loss. We use 100k samples for training and 10k each for
calibration and testing.

\subsection{Molecule Retrieval}
In many scientific applications, researchers aim to identify which molecules (also referred as metabolites) are
present in a biological or chemical sample. A standard experimental tool for
this task is mass spectrometry, which measures ionized molecules according to
their mass-to-charge ratio. In tandem mass spectrometry (MS/MS), a molecule is
further fragmented, producing a spectrum whose peaks provide information about
its underlying structure. The computational goal is then to retrieve, from a set
of candidate molecules, the molecular structure that most likely generated the
observed spectrum.

We evaluate our framework on MS/MS-based molecule identification using the
benchmark \emph{Spectraverse} \citep{gupta2026comprehensive}. Each input
$X_i \in \mathcal{X}$ is an MS/MS spectrum and $Y_i \in \mathfrak{M}$ is the
corresponding molecular graph. \emph{Spectraverse} contains spectra acquired
under different \emph{adduct types}, corresponding to different ionized forms of
a molecule, for example after gaining or losing a small charged group. Since
adducts can change the observed spectrum, each adduct type defines a distinct
dataset for evaluating our method. For this paper, each adduct type can simply be understood as a separate dataset of paired mass spectra and molecules, together with the associated candidate
molecules used for retrieval.

\paragraph{From spectra to graphs.}
We consider two existing methods for mapping spectra to molecular structures. 
\textsc{EmbCos}~\citep{de2026small} learns a shared embedding space for spectra 
and molecules from scratch, while \textsc{MSAlign}~\citep{krzakala2026msalign} 
leverages frozen pretrained encoders \textsc{DreaMS}~\citep{bushuiev2025self} and 
\textsc{ChemBERTa}~\citep{ahmad2022chemberta,chithrananda2020chemberta} for greater 
expressivity. Both output predicted SMILES strings (a standard text-based encoding 
of molecular structure), which are deterministically converted to molecular graphs 
via RDKit~\citep{landrum2013rdkit}, where nodes correspond to atom types. This 
yields the two-step pipeline $\text{MS/MS} \rightarrow \text{SMILES} \rightarrow 
\text{Graph}$, which we treat as a graph-valued predictor 
$f_\theta : \mathcal{X} \to \mathfrak{M}$.

\begin{remark}
Our focus is uncertainty quantification in a graph-valued output space, not improving 
MS/MS-to-graph prediction. We rely on the two-step pipeline above because direct 
MS/MS-to-graph approaches were not sufficiently accurate on this task.
\end{remark}

\paragraph{Candidate sets.}
For each spectrum $x$, we follow the same candidate library pipeline as \emph{MassSpecGym} \citep{bushuiev2024massspecgym}, which retrieves all molecules whose mass matches the observed spectrum $x$ and caps the set at $|\mathcal{L}(x)| \le 256$. All candidates are then converted to graphs via the same deterministic mapping, yielding a finite set $\mathcal{L}(x) \subset \mathfrak{M}$.

\begin{remark}
By construction of the candidates, the ground-truth molecule is always contained 
in $\mathcal{L}(x)$, so the completeness assumption $\mathbb{P}(Y_{n+1} \in 
\mathcal{L}(X_{n+1})) = 1$ is satisfied.
\end{remark}
\subsection{Experimental Protocol}
\newcommand{\adduct}[1]{\makecell[l]{Spectraverse\\\tiny{Adduct: #1}}}
\begin{table*}[t]
\centering
\caption{Comparison of conformal prediction (CP) and score conformalized quantile 
regression (SCQR) on graph-valued outputs with finite candidate sets. Acc denotes 
the base predictor retrieval accuracy. We report the metrics on non-empty predicted conformal sets, hence the changes in candidate set sizes.}
\label{tab:cp_scqr_comparison}
\scriptsize
\begin{tabular}{llcccccccccc}
\toprule
& & & & \multicolumn{2}{c}{\textbf{Conformal Set Size} $\downarrow$} & \multicolumn{2}{c}{\textbf{Candidate Set Size}} & \multicolumn{2}{c}{\textbf{Reduction} \% $\uparrow$} &
\\
\cmidrule(lr){5-6} \cmidrule(lr){7-8} \cmidrule(lr){9-10}
\textbf{Dataset} & \textbf{Model} $f_\theta$
& \textbf{Method}
& \makecell{\textbf{Emp.}\\\textbf{Cov.} (\%)}
& Mean & Median
& Mean & Median
& Mean & Median
& \makecell{\textbf{Empty}\\\textbf{rate} (\%) $\downarrow$} \\
\midrule
\multicolumn{11}{l}{\textit{Coverage $1-\alpha = 90\%$}} \\
\midrule
\multirow{2}{*}{Colors}
& \multirow{2}{*}{\makecell[l]{\textsc{Any2Graph} \\ \tiny{Acc: 82\%}}}
& CP                        & $90.2$ & $4$ & $1$ & $205$ & $223$ & $95.8\%$ & $98.9\%$ & $8.9\%$ \\
& & SCQR $|\mathcal{L}(x)|$ & $90.3$ & $4$ & $1$ & $201$ & $223$ & $95.6\%$ & $98.9\%$ & $8.8\%$ \\
& & SCQR $\operatorname{ResNet}(x)$ & $93.0$ & $4$ & $1$ & $205$ & $218$ & $96.2\%$ & $98.9\%$ & $5.7\%$ \\
\midrule
\multirow{3}{*}{\adduct{$[\text{M+CH}_3\text{COOH-H}]^{-}$}}
& \multirow{3}{*}{\makecell[l]{\textsc{MSAlign} \\ \tiny{Acc: 71.8\%}}}
& CP               & $90.0$ & $59$ & $56$ & $152$ & $159$ & $60.4\%$ & $61.0\%$ & $0.0\%$ \\
& & SCQR $|\mathcal{L}(x)|$ & $90.0$ & $47$ & $48$ & $147$ & $159$ & $64.5\%$ & $64.5\%$ & $0.0\%$ \\
& & SCQR $\textsc{DreaMS}(x)$ & $89.7$ & $29$ & $19$ & $151$ & $159$ & $79.4\%$ & $86.7\%$ & $0.7\%$ \\
\midrule
\multicolumn{11}{l}{\textit{Coverage $1-\alpha = 80\%$}} \\
\midrule
\multirow{3}{*}{\adduct{$[\text{M+CH}_3\text{COOH-H}]^{-}$}}
& \multirow{3}{*}{\makecell[l]{\textsc{EmbCos} \\ \tiny{Acc: 59.9\%}}}
& CP               & $84.7$ & $51.9$ & $53$ & $144.3$ & $133$ & $64.1\%$ & $63.5\%$ & $0.0\%$ \\
& & SCQR $|\mathcal{L}(x)|$ & $84.7$ & $52.1$ & $54$ & $144.3$ & $133$ & $63.9\%$ & $63.5\%$ & $0.0\%$ \\
& & SCQR $\textsc{DreaMS}(x)$ & $79.7$ & $27.8$ & $18$ & $144.1$ & $159$ & $80.0\%$ & $84.3\%$ & $0.0\%$ \\

\bottomrule
\end{tabular}
\end{table*}
\paragraph{Distance and nonconformity score.}Our nonconformity score is based on the $Z$-GW, instantiated in practice as FGW with task-specific cost matrices.
For Coloring, we use FGW with adjacency cost ($A$) and Feature Diffusion (FD) initialization of the solver. For Molecule Retrieval, we use FGW with Laplacian cost ($L$)
and identity initialization (Id).

\paragraph{Calibration.}
Standard conformal prediction (CP) estimates a single global $(1-\alpha)$ quantile 
from the calibration set, regardless of the input. Score Conformalized Quantile 
Regression (SCQR) instead conditions the threshold on an input-dependent attribute 
$\omega(x)$, allowing the prediction set size to adapt to properties of $x$. For 
the synthetic Coloring task, we consider two choices of $\omega(x)$: (i) the 
candidate set cardinality $\omega(x) = |\mathcal{L}(x)|$, and (ii) an image 
embedding obtained from a $\operatorname{ResNet}(x)$ variant~\citep{he2016deep}. For the molecule 
retrieval task, we similarly consider two variants: (i) the candidate set 
cardinality $\omega(x) = |\mathcal{L}(x)|$, and (ii) the learned spectral embedding 
$\omega(x) = \textsc{DreaMS}(x)$ of the input spectrum $x$, projected via Random 
Fourier Features~\citep{rahimi2007random}. Because the $\textsc{DreaMS}(x)$ embedding space is high-dimensional and only on the order of a thousand adduct-specific calibration points were available, directly regressing the conformal score in this space led to overfitting. We therefore apply random Fourier features (RFF), which regularizes the SCQR fit and enabled stable training despite the limited calibration data.

\paragraph{Evaluation metrics.}
All results are reported at nominal level $1-\alpha=0.9$.
We measure empirical coverage, conformal set size
(mean and median), reduction relative to the full candidate
library, and empty-set rate.

\subsection{Results}

\paragraph{Coverage Validity and Efficiency.}  Across both tasks, empirical coverage remains close to the nominal levels, confirming validity under $Z$-GW scoring, as reported in Table~\ref{tab:cp_scqr_comparison} and Fig.~\ref{fig:cp_scqr_hist}. We evaluate efficiency through conformal set size and reduction relative to the candidate library. On \textit{Coloring}, SCQR conditioned on $|\mathcal{L}(x)|$ performs similarly to CP: the median set size is $1$ for both, with over $95\%$ average reduction, and conditioning on the ResNet embedding brings only marginal further gains in mean reduction. On \textit{Molecule Retrieval}, conditioning matters more. SCQR based on $|\mathcal{L}(x)|$ offers only a modest improvement over CP, while conditioning on the spectral embedding $\textsc{DreaMS}(x)$ yields a much sharper reduction in conformal set size, roughly halving it relative to CP while keeping coverage close to nominal. The same pattern holds at the lower $80\%$ target with the weaker \textsc{EmbCos} base predictor: $|\mathcal{L}(x)|$-conditioned SCQR tracks CP closely, whereas $\textsc{DreaMS}(x)$-conditioned SCQR again produces markedly smaller sets.

\paragraph{Choice of conditioning attribute.}
Table~\ref{tab:scqr_conditioning} compares conditioning choices for SCQR on the 
Coloring dataset. While marginal coverage is achieved by all variants, the choice 
of $\omega(x)$ has a significant impact on conditional coverage. SCQR with image 
embeddings yields the most robust worst-slab coverage, outperforming candidate set size conditioning, suggesting that richer input representations 
better capture heteroscedasticity across instances.

\begin{table}[h]
\centering
\caption{Marginal and worst-slab coverage for different SCQR conditioning attributes 
on the Coloring dataset. Slabs group inputs by number of candidates.}
\label{tab:scqr_conditioning}
\scriptsize
\begin{tabular}{lcc}
\toprule
Method & Marginal Coverage & Worst-Slab Coverage \\
\midrule
CP                        & $90.2\%$ & $64.7\%$ \\
SCQR $|\mathcal{L}(x)|$  & $90.3\%$ & $67.6\%$  \\
SCQR $\operatorname{ResNet}(x)$        & $90.2\%$ & $73.5\%$ \\
\bottomrule
\end{tabular}
\end{table}

\begin{figure}[H]
    \centering\includegraphics[width=1\linewidth, trim={1cm 0.8cm 1cm 1cm}, clip]{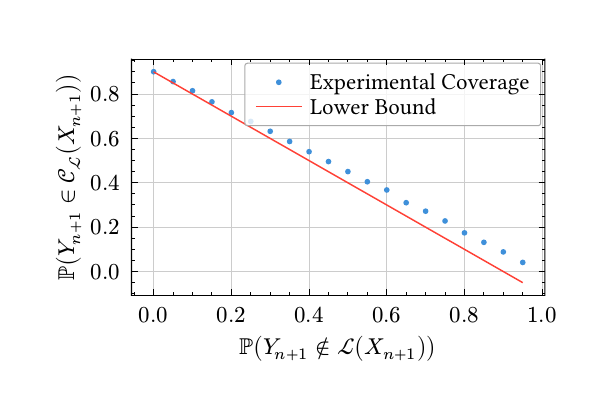}
    \vspace{-0.8cm}
    \caption{Empirical coverage as a function of the probability of ground-truth absence from the candidate library $\mathbb{P}(Y_{n+1} \notin \mathcal{L}(X_{n+1}))$, evaluated on the Coloring dataset. }
    \label{fig:missing}
\end{figure}

\paragraph{Candidate set incompleteness.}
When the assumption that the ground-truth output is always contained in the 
candidate library fails, Remark~\ref{rem:incomplete} provides a theoretical 
lower bound on coverage as a function of the incompleteness probability 
$\mathbb{P}(Y_{n+1} \notin \mathcal{L}(X_{n+1}))$. Figure~\ref{fig:missing} 
empirically validates this bound on the Coloring dataset by artificially removing 
the ground truth from the candidate library with increasing probability. Empirical 
coverage degrades gracefully and remains above the theoretical lower bound 
throughout.

\begin{figure}[H]
\centering
\begin{subfigure}[t]{0.48\linewidth}
    \centering
    \includegraphics[width=\linewidth]{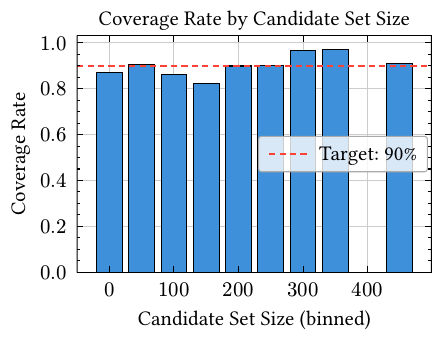}
    \caption{Coloring — CP}
\end{subfigure}
\hfill
\begin{subfigure}[t]{0.48\linewidth}
    \centering
    \includegraphics[width=\linewidth]{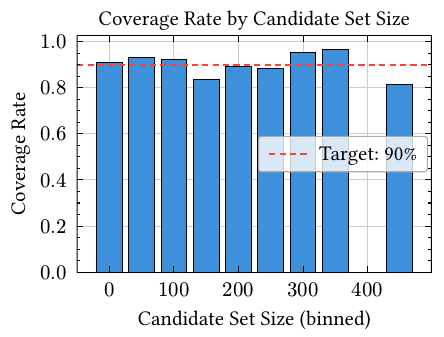}
    \caption{Coloring — SCQR}
\end{subfigure}

\vspace{1pt}

\begin{subfigure}[t]{0.48\linewidth}
    \centering
    \includegraphics[width=\linewidth]{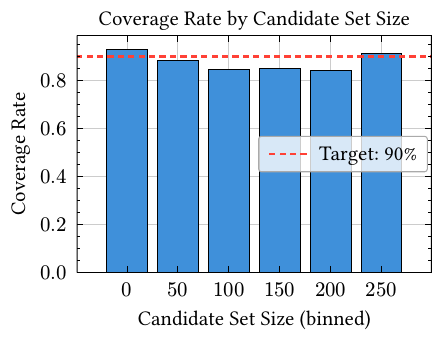}
    \caption{Molecule — CP}
\end{subfigure}
\hfill
\begin{subfigure}[t]{0.48\linewidth}
    \centering
    \includegraphics[width=\linewidth]{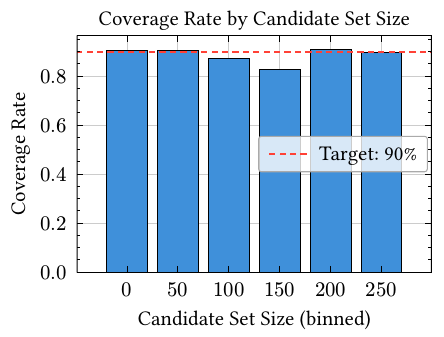}
    \caption{Molecule — SCQR}
\end{subfigure}

\caption{Empirical coverage versus candidate set
size, for CP (left) and SCQR (right). For SCQR, the attribute
function $\omega(x)$ is task-dependent: $\omega(x)=|\mathcal{L}(x)|$
(candidate set size) for Coloring, and $\omega(x)=\textsc{DreaMS}(x)$
(mass-spectrum embedding) for Molecule Retrieval.
}
\label{fig:cp_scqr_hist}
\end{figure}

\section{Conclusion}
We introduced a conformal prediction framework for
graph-valued outputs grounded in the $Z$-Gromov–Wasserstein
geometry. By defining nonconformity through $Z$-GW, the method provides
distribution-free coverage guarantees while respecting
the permutation-invariant structure of graphs.
We further proposed Score Conformalized Quantile Regression (SCQR),
which yields adaptive prediction sets by conditioning on
input-dependent signals, improving efficiency without
compromising validity. More broadly, the framework extends naturally to other metric for graph data and to other
structured output spaces representable as $Z$-networks,
such as meshes, point clouds, and distributions,
suggesting a general geometry-aware approach to
uncertainty quantification in structured prediction.

\begin{acknowledgements}
The authors thank Aymeric Dieuleveut for helpful discussions and feedback and Paul Krzakala for his insights on the experimental part of the work. The first author is funded  through the PEPR IA FOUNDRY (ANR-23-PEIA-0003) and the second author through the PRCE Far-see (ANR-24-CE23-0921). 

It also received funding from the European Union’s Horizon
Europe research and innovation programme under grant
agreement 101120237 (ELIAS). The author(s) would also like to thank the Isaac Newton Institute for Mathematical Sciences, Cambridge, for support and hospitality during the programme Representing, calibrating \& leveraging prediction uncertainty from statistics to machine learning, where work on this paper was undertaken. This work was supported by EPSRC grant EP/Z000580/1.
\end{acknowledgements}

\bibliography{bibliography}
\appendix
\onecolumn
\section{More on Theoretical Results}
\label{app:theory}
\subsection{Proof of Proposition \ref{prop:weak_iso_perm}}
\label{proof:weak_iso_perm}

\begin{proof}
Fix orderings $V_1=\{v_1,\dots,v_n\}$ and $V_2=\{w_1,\dots,w_n\}$ and identify
$\omega_1,\omega_2$ with their matrix representations.

\medskip
\noindent\emph{($\Rightarrow$)}
Assume $G_1\sim G_2$. By Definition~\ref{def:weak-iso}, there exist a finite $Z$-network
$W=(V_W,\omega_W,\mu_W)$ and measure-preserving maps
$\phi_1:V_W\to V_1$ and $\phi_2:V_W\to V_2$ such that
\[
\omega_W=\phi_1^{*}\omega_1=\phi_2^{*}\omega_2
\quad
\mu_W\otimes\mu_W\text{-a.e.}
\]

Since $\mu_1$ is uniform and measure preserving,
\[
\mu_W(\phi_1^{-1}(\{v\}))=\mu_1(\{v\})=\frac1n
\qquad\forall v\in V_1,
\]
hence $\phi_1$ is surjective; similarly for $\phi_2$.
Writing $\mu_W=\sum_{k=1}^N c_k\delta_{u_k}$ with $c_k>0$, the above implies
\[
\sum_{u_k:\,\phi_1(u_k)=v} c_k=\frac1n
\qquad\forall v\in V_1,
\]
and analogously for $\phi_2$.

Since $V_W$ is finite with full-support measure, the pullback identity holds
everywhere:
\[
\omega_W(u,u')=\omega_1(\phi_1(u),\phi_1(u'))
=\omega_2(\phi_2(u),\phi_2(u'))
\qquad\forall u,u'\in V_W.
\]

Choose a section $s:V_1\to V_W$ with $\phi_1\circ s=\mathrm{id}_{V_1}$ and define
$\varphi:=\phi_2\circ s:V_1\to V_2$. Then
\[
\omega_1(v_i,v_j)=\omega_2(\varphi(v_i),\varphi(v_j))
\qquad\forall i,j,
\]
so $\omega_1=\varphi^{*}\omega_2$. Since $\mu_1$ and $\mu_2$ are uniform atomic measures and $\varphi$ is
measure-preserving, each fiber $\varphi^{-1}(\{w\})$ has mass $1/n$.
As $|V_1|=|V_2|=n$, this implies that all fibers are singletons and hence
$\varphi$ is bijective. Let $P=(P_{ij})\in\sigma_n$ be the permutation matrix
defined by
\[
P_{ij} := \mathbb{I}_{\{\,w_j=\varphi(v_i)\,\}},
\]
so that $\omega_1 = P^\top \omega_2 P$.

\medskip
\noindent\emph{($\Leftarrow$)}
If $\omega_1=P^\top\omega_2 P$ for some $P\in\sigma_n$, let $\varphi$ be the
associated bijection and set $W:=G_1$, $\phi_1:=\mathrm{id}$,
$\phi_2:=\varphi$. Both maps are measure-preserving and the pullback identity
holds everywhere, hence $G_1\sim G_2$.
\end{proof}

\subsection{Extra results for Section \ref{sec:conformalgraph}}
\begin{lemma}[Measurable maps preserve equality in distribution]
\label{lem:measurable_preserve_law}
Let $(\Omega,\mathcal F,\mathbb P)$ be a probability space.
Let $Z$ and $Z'$ be random variables taking values in a measurable space
$(\mathcal Z,\Sigma_{\mathcal Z})$, and let
$g : \mathcal Z \to \mathcal Y$ be a measurable map into another measurable space
$(\mathcal Y,\Sigma_{\mathcal Y})$.
If
\[
Z \overset{d}{=} Z',
\]
then
\[
g(Z) \overset{d}{=} g(Z').
\]
\end{lemma}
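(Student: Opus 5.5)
The plan is to unwind the definition of equality in distribution and reduce it to preimages of measurable sets. The hypothesis $Z \overset{d}{=} Z'$ means that the pushforward laws coincide on $\Sigma_{\mathcal Z}$:
\[
\mathbb P(Z\in A)=\mathbb P(Z'\in A)\quad\text{for all } A\in\Sigma_{\mathcal Z}.
\]
The goal is the analogous identity for $g(Z)$ and $g(Z')$ on $\Sigma_{\mathcal Y}$.

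First, I will check that $g(Z)$ and $g(Z')$ are genuine random variables in $(\mathcal Y,\Sigma_{\mathcal Y})$. This holds because a composition of measurable maps is measurable: $Z:\Omega\to\mathcal Z$ is $\mathcal F/\Sigma_{\mathcal Z}$-measurable and $g$ is $\Sigma_{\mathcal Z}/\Sigma_{\mathcal Y}$-measurable.

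Next, I fix an arbitrary $B\in\Sigma_{\mathcal Y}$. By measurability of $g$, the preimage $A:=g^{-1}(B)$ lies in $\Sigma_{\mathcal Z}$. Using the set identity $\{g(Z)\in B\}=\{Z\in g^{-1}(B)\}$ (and likewise for $Z'$), I obtain
\[
\mathbb P(g(Z)\in B)=\mathbb P(Z\in A)=\mathbb P(Z'\in A)=\mathbb P(g(Z')\in B).
\]
The middle equality is exactly the hypothesis. Since $B$ was arbitrary, the laws $\mathbb P\circ g(Z)^{-1}$ and $\mathbb P\circ g(Z')^{-1}$ agree on all of $\Sigma_{\mathcal Y}$, which is the claim. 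Equivalently, the law of $g(Z)$ is the pushforward $g_\#(\mathbb P_Z)$, and equal measures have equal pushforwards.

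I expect no real obstacle. The only point needing care is that $g$ must be measurable with respect to the $\sigma$-algebra actually placed on $\mathcal Y$. In the later application, $\mathcal Y$ will be $\mathcal M$ or a product of copies of $\mathcal X\times\mathcal M$, with $g$ the (product of the) canonical projection $h$. There I would equip $\mathcal M$ with the quotient $\sigma$-algebra $\{B:\,h^{-1}(B)\in\Sigma_{\mathfrak M}\}$, which makes $h$ measurable by construction. That application is outside the present lemma, which only assumes measurability of $g$.
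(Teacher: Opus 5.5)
Your proof is correct and is the same argument as the paper's: fix $B\in\Sigma_{\mathcal Y}$, pass to the preimage $g^{-1}(B)\in\Sigma_{\mathcal Z}$, and apply the hypothesis $\mathbb P(Z\in g^{-1}(B))=\mathbb P(Z'\in g^{-1}(B))$. Your extra check that $g\circ Z$ is measurable, and your remark that $\mathcal M$ should carry the quotient $\sigma$-algebra so that $h$ is measurable, match how the paper later applies the lemma.
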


\begin{proof} Let $B \in \Sigma_{\mathcal Y}$ be arbitrary.
Since $g$ is measurable, $g^{-1}(B) \in \Sigma_{\mathcal Z}$, and therefore
\[
\mathbb P\big(g(Z) \in B\big)
= \mathbb P\big(Z \in g^{-1}(B)\big)
= \mathbb P\big(Z' \in g^{-1}(B)\big)
= \mathbb P\big(g(Z') \in B\big).
\]
Since this holds for all measurable sets $B \subseteq \mathcal Y$, we conclude that
$g(Z)$ and $g(Z')$ have the same distribution.
\end{proof}

\subsection{Proof of Lemma \ref{lem:exchangeability_preserved}}
\label{proof-lem:exchangeability_preserved}
\begin{proof}
Exchangeability of $(X_i,Y_i)_{i=1}^n$ means that for any permutation
$\sigma$ of $\{1,\dots,n\}$,
\[
(X_1, Y_1,\dots,X_n,Y_n)
\;\overset{d}{=}\;
(X_{\sigma(1)}, Y_{\sigma(1)},\dots,X_{\sigma(n)}, Y_{\sigma(n)}).
\]

Define the measurable map $g:\mathcal X \times \mathfrak M \to \mathcal X \times \mathcal M$, 
$g(x,y) := (x,h(y))$.
Since $h$ is measurable, so is $g$.
Applying $g$ componentwise and invoking
Lemma~\ref{lem:measurable_preserve_law}, we obtain
\[
(X_1,\tilde Y_1,\dots,X_n,\tilde Y_n)
\;\overset{d}{=}\;
(X_{\sigma(1)},\tilde Y_{\sigma(1)},\dots,X_{\sigma(n)},\tilde Y_{\sigma(n)}),
\]
where $\tilde Y_i = h(Y_i)$.
Since this holds for all permutations $\sigma$, the sequence
$(X_i,\tilde Y_i)_{i=1}^n$ is exchangeable.
\end{proof}

\subsection{Proof of Proposition \ref{prop:conformal_validity_quotient}}
\label{proof-prop:conformal_validity_quotient}
\begin{proof}
By invariance of $s$ under $\sim$, it factors through the canonical
projection $h$ and induces a score on $\mathcal M$.
By Lemma~\ref{lem:measurable_preserve_law}, the score sequence
$\{s(X_i,Y_i)\}_{i=1}^{n+1}$ is exchangeable.
Standard conformal prediction theory \cite{angelopoulos2023conformal} then yields
\[
\mathbb P\!\left( Y_{n+1}\in \mathcal C^{\mathfrak M}_\alpha(X_{n+1})\right)\ge 1-\alpha.
\]

By construction, membership in $\mathcal C^{\mathfrak M}_\alpha(x)$ is characterized
by the inequality $s(x,y)\le \hat q_{1-\alpha}$. Since
$s(x,y)=\tilde s(x,h(y))$, this criterion depends on $y$ only through its
equivalence class $h(y)$: for any $y,y'\in\mathfrak M$ with $h(y)=h(y')$,
\[
y\in \mathcal C^{\mathfrak M}_\alpha(x)
\quad\Longleftrightarrow\quad
y'\in \mathcal C^{\mathfrak M}_\alpha(x).
\]

In other words, $\mathcal C^{\mathfrak M}_\alpha(x)$ is a union of equivalence classes, containing either all
node relabelings of a given graph or none of them. In particular, denoting by $h^{-1}(B):=\{y\in\mathfrak M : h(y)\in B\}$
the preimage of a set $B\subseteq\mathcal M$, 
\[
h^{-1}\!\big(\mathcal C^{\mathcal M}_\alpha(x)\big)
=\mathcal C^{\mathfrak M}_\alpha(x),
\qquad\text{where }
\mathcal C^{\mathcal M}_\alpha(x):=h\!\left(\mathcal C^{\mathfrak M}_\alpha(x)\right).
\]
Let $\tilde Y_{n+1}:=h(Y_{n+1})$; its law is the pushforward measure
$P_{\tilde  Y_{n+1}}=h_\# P_{Y_{n+1}}$, which is well defined since $\mathcal M$ is equipped with
the quotient $\sigma$-algebra, under which $h$ is measurable by construction.
Then
\begin{align*}
    \mathbb P\!\left(\tilde Y_{n+1}\in \mathcal C^{\mathcal M}_\alpha(X_{n+1})\right)
&=
\mathbb P\!\left( Y_{n+1}\in h^{-1}\!\left(\mathcal C^{\mathcal M}_\alpha(X_{n+1})\right)\right)\\
&=
\mathbb P\!\left( Y_{n+1}\in \mathcal C^{\mathfrak M}_\alpha(X_{n+1})\right),
\end{align*}
Combining with the coverage guarantee in
$\mathfrak M$ yields
$\mathbb P\!\left(\tilde Y_{n+1}\in \mathcal C^{\mathcal M}_\alpha(X_{n+1})\right)\ge 1-\alpha$.
\end{proof}

\subsection{Proof of Proposition \ref{prop:scqr-validity}}
\label{proof-prop:scqr-validity}

\begin{proof}
Define the residuals $E_i = s(X_i, Y_i) - \psi(\omega(X_i))$ for
$i=1, \dots, n+1$. Since $(X_i, Y_i)_{i=1}^{n+1}$ are exchangeable and the
functions $s$, $\omega$, and $\psi$ are measurable and independent of the
calibration data, the sequence $(E_1, \dots, E_{n+1})$ is exchangeable.

For $\beta\in(0,1)$, let $q_\beta(V_1,\dots,V_k)$ denote the
$\lceil \beta k\rceil$-th smallest value of $(V_1,\dots,V_k)$, so that the
threshold reads $\hat q_{1-\alpha} = q_{1-\alpha}(E_1,\dots,E_n,+\infty)$.
First note that
\[
E_{n+1} \le q_{1-\alpha}(E_1,\dots,E_n,+\infty)
\iff
E_{n+1} \le q_{1-\alpha}(E_1,\dots,E_n,E_{n+1}),
\]
since replacing $+\infty$ by $E_{n+1}$ leaves the
$\lceil (n+1)(1-\alpha)\rceil$-th smallest value unchanged whenever
$E_{n+1}$ lies above it, and both events hold whenever $E_{n+1}$ lies below
it. By exchangeability of $(E_1,\dots,E_{n+1})$, the event that $E_{n+1}$
is among the $\lceil (n+1)(1-\alpha)\rceil$ smallest of the $n+1$ residuals
has probability at least $\lceil (n+1)(1-\alpha)\rceil/(n+1)$, hence
\[
\mathbb{P}\left( E_{n+1} \le \hat q_{1-\alpha} \right)
\ge \frac{\lceil (n+1)(1-\alpha) \rceil}{n+1} \ge 1 - \alpha.
\]
Since $Y_{n+1} \in \mathcal{C}(X_{n+1}) \iff E_{n+1} \le \hat q_{1-\alpha}$,
the marginal coverage guarantee follows.
\end{proof}

\newpage
\section{Evaluation Metrics}
\label{app:metrics}

This appendix details the metrics reported in Table~\ref{tab:cp_scqr_comparison} for comparing conformal prediction (CP) and score-conditional quantile regression (SCQR) on graph-valued outputs with finite candidate sets.

\paragraph{Setup.}
For each test input $x$, let $\mathcal{L}(x)$ denote the associated candidate set of graphs, and let $\mathcal{C}_\alpha(x) \subseteq \mathcal{L}(x)$ be the conformal prediction set constructed at nominal coverage level $1-\alpha$. Let $y$ denote the ground-truth graph.

\paragraph{Empirical Coverage.}
Empirical coverage is computed as
\[
\widehat{\mathrm{Cov}}(1-\alpha)
=
\frac{1}{|\mathcal{D}_{\mathrm{test}}|}
\sum_{(x,y)\in\mathcal{D}_{\mathrm{test}}}
\mathbb{I}\big\{ y \in \mathcal{C}_\alpha(x) \big\}.
\]
Both CP and SCQR are evaluated at the same nominal level $1-\alpha$.

\paragraph{Prediction Set Size.}
The prediction set size for an input $x$ is given by $|\mathcal{C}_\alpha(x)|$.  
We report both the mean and the median of this quantity across the test set to account for potential skewness in the distribution of set sizes.

\paragraph{Relative Reduction.}
To quantify efficiency relative to the original candidate set, we define the relative reduction for each input as
\[
\mathrm{Reduction}(x)
=
\frac{|\mathcal{L}(x)| - |\mathcal{C}_\alpha(x)|}{|\mathcal{L}(x)|}
\times 100\%.
\]
We report the mean and median reduction over the test set.

\paragraph{Empty Set Rate.}
The empty set rate is defined as
\[
\frac{1}{|\mathcal{D}_{\mathrm{test}}|}
\sum_{(x,y)\in\mathcal{D}_{\mathrm{test}}}
\mathbb{I}\big\{ \mathcal{C}_\alpha(x) = \emptyset \big\}.
\]
This quantity reflects how often the conformal procedure returns no candidate graph and is reported as a sanity check in finite-candidate settings.

\begin{remark}
    Reporting both mean and median statistics is crucial due to the typically heavy-tailed distribution of candidate set sizes. While both CP and SCQR are guaranteed to satisfy marginal coverage, differences in set size and reduction reflect their relative efficiency, with SCQR expected to yield more adaptive prediction sets.
\end{remark}

\begin{remark}
We distinguish between validity, defined as marginal coverage,
and efficiency, defined as the size of the conformal prediction set.
Since prediction set size is meaningful only when the ground truth
is contained in the conformal set, we report mean and median
\textbf{prediction set size} and \textbf{relative reduction}
conditioned on coverage, i.e., over the subset
\[
\{x \mid (x,y)\in\mathcal D_{\text{test}},\ y\in\mathcal C_\alpha(x)\}.
\]
As a consequence, the reported candidate set statistics in
Table~\ref{tab:cp_scqr_comparison} may differ across methods,
since they are computed only over the subset of inputs for
which the conformal set contains the ground truth.
Because each method achieves coverage on slightly different
instances, the corresponding candidate set averages are
evaluated on different subsets. Coverage itself is always reported over the full test set.
\end{remark}


\newpage
\section{Experimental Details}
\label{app:exp}

\subsection{Datasets}
\label{app:dataset}
We describe here the datasets and evaluation protocols used for the Coloring. 

\paragraph{Coloring (Synthetic).}
For the Coloring task, we use the synthetic dataset introduced in \cite{krzakala2024any2graph}. 
The dataset already provides predefined test and validation splits, in which we used 10k for each. 
We use the test split for conformal calibration and report all evaluation metrics on the validation split.
\begin{figure}[H]
    \centering
   \includegraphics[width=0.6\linewidth]{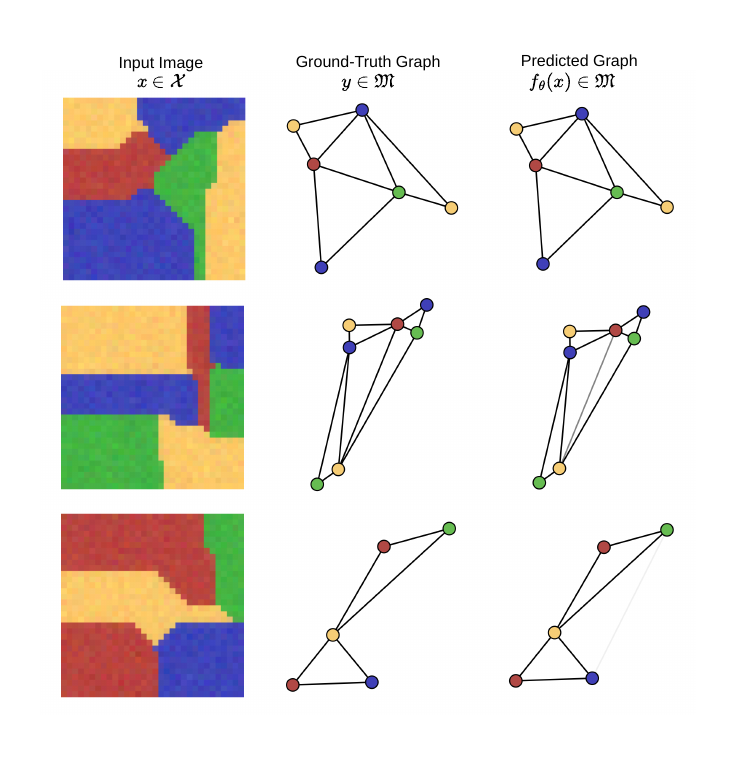}
    \caption{Coloring dataset examples using \textsc{Any2Graph} as graph predictor. }
    \label{fig:placeholder}
\end{figure}

\subsection{Training}

For the Coloring task, we use \textsc{Any2Graph} \citep{krzakala2024any2graph}. 
Both models were trained using the default configurations provided in their respective official repositories, without additional hyperparameter tuning.

Training was performed on NVIDIA A100 GPU. 
Conformal calibration and evaluation were conducted on CPU. 

SCQR calibration based on candidate set size $|\mathcal{L}(x)|$ was done by fitting a 
\href{https://scikit-learn.org/stable/modules/generated/sklearn.linear_model.QuantileRegressor.html}
{linear quantile regressor} to predict calibration set non-conformity scores from
$|\mathcal{L}(x)|$.

SCQR calibration based on \textsc{DreaMS} embeddings was done by training a 2-layer neural network
to predict non-conformity scores from them.

The architecture is as follows:
\begin{itemize}
    \item \texttt{Linear(768, 384)}
    \item \texttt{ReLU}
    \item \texttt{Linear(384, 1)}
\end{itemize}

The network was trained with the following parameters:

\begin{table*}[th]
\centering
\caption{embedding-based SCQR neural network training parameters}
\begin{tabular}{lr}
\toprule
Parameter
& Value \\
\midrule
Optimizer & Adam \\
Learning rate & 0.001 \\
Batch size & 32 \\
Early stopping patience & 3 \\
Early stopping minimum loss improvement & 0.0001 \\
\bottomrule
\end{tabular}
\end{table*}

Complete training on the 10 000 samples of the calibration set takes a few seconds.
\subsection{Execution time}

Execution time is a concern when working with graphs,
especially in conformal prediction where a single prediction
may be compared to thousands of candidates.
We report here the effects of various FGW hyperparameters on computation
time on the Coloring dataset.

The following results were obtained using a Lenovo P16s Gen 4 with an intel ultra 7 chip.

\begin{figure}[H]
    \centering
   \includegraphics[width=0.8\linewidth]{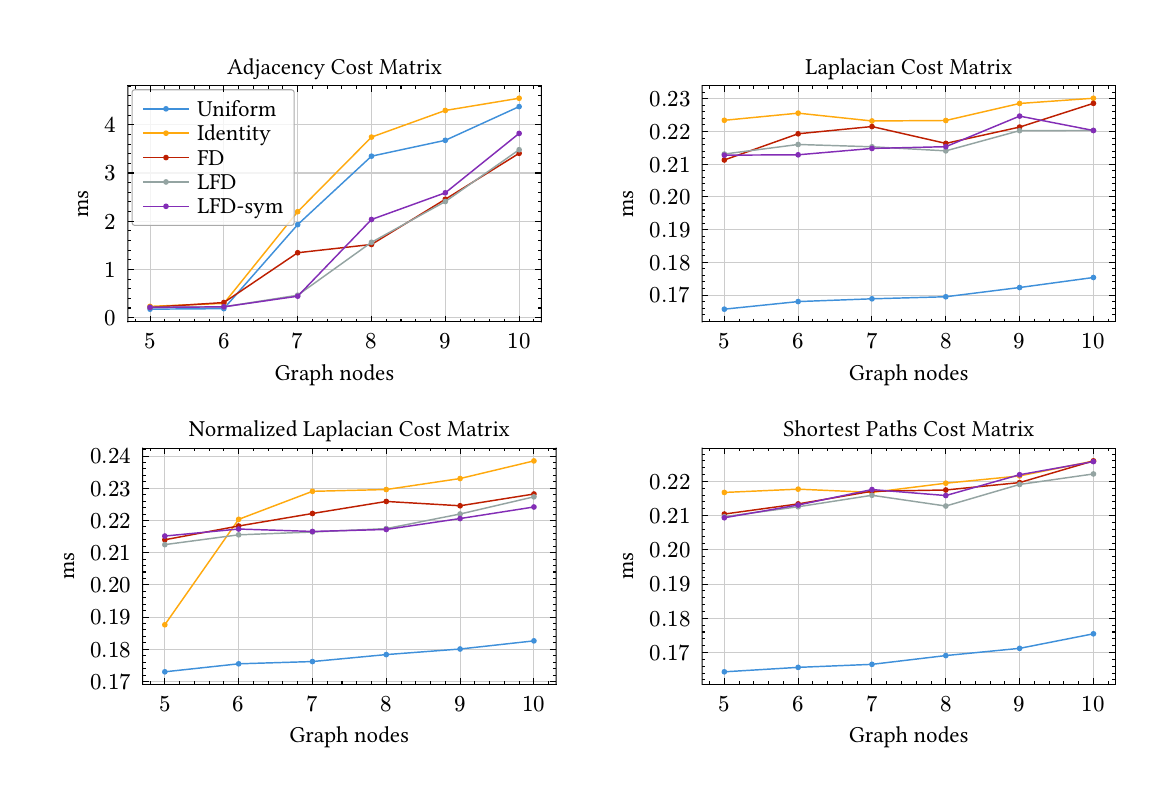}
    \caption{Time taken to compute a FGW distance between a pair of Coloring dataset graphs
    with various amounts of nodes.
    Notice how the performance of uniform initialization is offset from the others,
    given that it is the default initialization of the FGW algorithm, not requiring prior
    computations. The adjacency cost matrix scales very badly with the amount of nodes.
    At 10 nodes per graph, it is already 20 times slower than the other options.}
\end{figure}

\begin{figure}[H]
    \centering
   \includegraphics[width=0.6\linewidth]{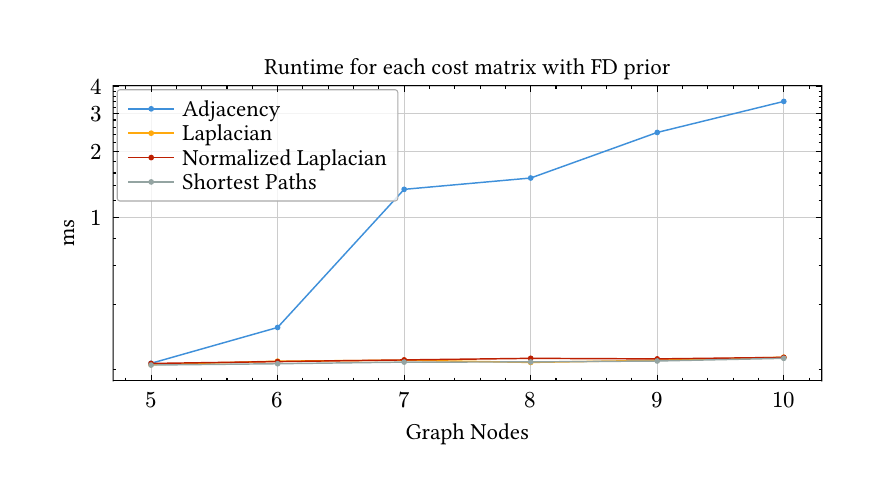}
    \caption{Computation time of the FGW distance between a pair of coloring
    dataset graphs with the FD prior. Although the adjacency cost matrix
    yields good conformal performance, it is not recommended when working with large graphs.}
\end{figure}

\subsection{Extra Results}

\begin{table}[!htbp]
\centering
\caption{Comparison of conformal prediction (CP) and score conformalized quantile 
regression (SCQR) on graph-valued outputs with finite candidate sets. Acc denotes 
the base predictor retrieval accuracy.}
\label{tab:cp_scqr_comparison}
\scriptsize
\begin{tabular}{llcccccccccc}
\toprule
& & & & \multicolumn{2}{c}{\textbf{Conformal Set Size} $\downarrow$} & \multicolumn{2}{c}{\textbf{Candidate Set Size}} & \multicolumn{2}{c}{\textbf{Reduction} \% $\uparrow$} &
\\
\cmidrule(lr){5-6} \cmidrule(lr){7-8} \cmidrule(lr){9-10}
\textbf{Dataset} & \textbf{Model} $f_\theta$
& \textbf{Method}
& \makecell{\textbf{Emp.}\\\textbf{Cov.} (\%)}
& Mean & Median
& Mean & Median
& Mean & Median
& \makecell{\textbf{Empty}\\\textbf{rate} (\%) $\downarrow$} \\
\midrule
\multicolumn{11}{l}{\textit{Coverage $1-\alpha = 90\%$}} \\
\midrule
\multirow{2}{*}{Colors}
& \multirow{2}{*}{\makecell[l]{\textsc{Any2Graph} \\ \tiny{Acc: 82\%}}}
& CP                        & $90.2$ & $4$ & $1$ & $205$ & $223$ & $95.8\%$ & $98.9\%$ & $8.9\%$ \\
& & SCQR $|\mathcal{L}(x)|$ & $90.3$ & $4$ & $1$ & $201$ & $223$ & $95.6\%$ & $98.9\%$ & $8.8\%$ \\
& & SCQR $\operatorname{ResNet}(x)$ & $93.0$ & $4$ & $1$ & $205$ & $218$ & $96.2\%$ & $98.9\%$ & $5.7\%$ \\
\midrule
\multirow{3}{*}{\adduct{$[\text{M+CH}_3\text{COOH-H}]^{-}$}}
& \multirow{3}{*}{\makecell[l]{\textsc{MSAlign} \\ \tiny{Acc: 71.8\%}}}
& CP               & $90.0$ & $59$ & $56$ & $152$ & $159$ & $60.4\%$ & $61.0\%$ & $0.0\%$ \\
& & SCQR $|\mathcal{L}(x)|$ & $90.0$ & $47$ & $48$ & $147$ & $159$ & $64.5\%$ & $64.5\%$ & $0.0\%$ \\
& & SCQR $\textsc{DreaMS}(x)$ & $89.7$ & $29$ & $19$ & $151$ & $159$ & $79.4\%$ & $86.7\%$ & $0.7\%$ \\
\midrule
\multirow{3}{*}{\adduct{$[\text{M+HCOOH-H}]^{-}$}}
& \multirow{3}{*}{\makecell[l]{\textsc{MSAlign} \\ \tiny{Acc: 31.1\%}}}
& CP               & $90.0$ & $166$ & $184$ & $250$ & $256$ & $33.4\%$ & $25.8\%$ & $0.0\%$ \\
& & SCQR $|\mathcal{L}(x)|$ & $90.9$ & $165$ & $184$ & $252$ & $256$ & $34.4\%$ & $25.8\%$ & $0.9\%$ \\
& & SCQR $\textsc{DreaMS}(x)$ & $90.2$ & $143$ & $154$ & $250$ & $256$ & $42.6\%$ & $36.3\%$ & $0.2\%$ \\
\midrule
\multicolumn{11}{l}{\textit{Coverage $1-\alpha = 80\%$}} \\
\midrule
\multirow{3}{*}{\adduct{$[\text{M+CH}_3\text{COOH-H}]^{-}$}}
& \multirow{3}{*}{\makecell[l]{\textsc{EmbCos} \\ \tiny{Acc: 59.9\%}}}
& CP               & $84.7$ & $51.9$ & $53$ & $144.3$ & $133$ & $64.1\%$ & $63.5\%$ & $0.0\%$ \\
& & SCQR $|\mathcal{L}(x)|$ & $84.7$ & $52.1$ & $54$ & $144.3$ & $133$ & $63.9\%$ & $63.5\%$ & $0.0\%$ \\
& & SCQR $\textsc{DreaMS}(x)$ & $79.7$ & $27.8$ & $18$ & $144.1$ & $159$ & $80.0\%$ & $84.3\%$ & $0.0\%$ \\
\midrule
\multirow{3}{*}{\adduct{$[\text{M+K}]^{+}$}}
& \multirow{3}{*}{\makecell[l]{\textsc{EmbCos} \\ \tiny{Acc: 25.0\%}}}
& CP               & $87.2$ & $120.0$ & $95$ & $255.8$ & $256$ & $53.1\%$ & $62.9\%$ & $0.0\%$ \\
& & SCQR $|\mathcal{L}(x)|$ & $86.4$ & $119.0$ & $93$ & $255.8$ & $256$ & $53.5\%$ & $63.7\%$ & $0.0\%$ \\
& & SCQR $\textsc{DreaMS}(x)$ & $83.4$ & $112.8$ & $85$ & $255.8$ & $256$ & $55.9\%$ & $66.8\%$ & $0.0\%$ \\
\midrule
\multirow{3}{*}{\adduct{$[\text{M}]^{+}$}}
& \multirow{3}{*}{\makecell[l]{\textsc{EmbCos} \\ \tiny{Acc: 37.9\%}}}
& CP               & $80.2$ & $124.8$ & $146$ & $250.0$ & $256$ & $49.7\%$ & $43.0\%$ & $0.0\%$ \\
& & SCQR $|\mathcal{L}(x)|$ & $88.8$ & $165.0$ & $151$ & $250.6$ & $256$ & $34.4\%$ & $40.6\%$ & $0.0\%$ \\
& & SCQR $\textsc{DreaMS}(x)$ & $79.3$ & $108.0$ & $144$ & $250.0$ & $256$ & $56.2\%$ & $43.8\%$ & $2.6\%$ \\
\bottomrule
\end{tabular}
\end{table}

\begin{table*}[th]
\centering
\caption{Summary of acronyms used in the experiments.}
\label{tab:cost_matrix_powers}
\begin{tabular}{lll}
\toprule
Category
& Acronym
& Description \\
\midrule
\multirow{4}{*}{Cost Matrix $C$} & $A$ & Adjacency \\
                                 & $L$ & Laplacian \\
                                 & $L^{sym}$ & Laplacian (symmetrically normalized) \\
                                 & $SP$ & Shortest Paths \\

\midrule
\multirow{5}{*}{\makecell[l]{Initial Transport Plan\\ $G_0$}}
& FD & \makecell[l]{\textbf{Feature Diffusion:}\\ Solution to the \textit{Earth Mover's Distance}\\ problem for given nodes of feature augmented vector $(F, AF)$} \\
& LFD & \makecell[l]{\textbf{Laplacian Feature Diffusion:}\\ Same as Feature Diffusion but with features $(F, LF)$} \\
& LFD-sym & \makecell[l]{\textbf{Sym-Norm Laplacian Feature Diffusion:}\\ Same as Feature Diffusion but with features $(F, L^{sym}F)$} \\
& Id & \makecell[l]{\textbf{Identity matrix:} \\ when the two graphs have the same amount of nodes, \\else use the solver's default initialization.} \\
& Uniform & \makecell[l]{\textbf{Uniform:} \\ FGW solver default initialization (uniform transport).} \\


\bottomrule
\end{tabular}
\end{table*}

\subsection{Results by hyperparameter}
\label{app:results}

The choices of the $G_0$ FGW prior transport and of the cost matrix
both influence the speed of convergence and the performance of the produced distances
for conformal set prediction.


\begin{table*}[th]
\centering
\caption{Cost matrix impact on different tasks. The colors task uses the feature diffusion $G_0$ prior.}
\label{tab:cost_matrix_powers}
\begin{tabular}{llcccccc}
\toprule
Task
& Cost Matrix
& \makecell{Empirical \\ Coverage}
& \makecell{Mean\\set size $\downarrow$}
& \makecell{Median\\set size $\downarrow$}
& \makecell{Mean\\reduction (\%) $\uparrow$}
& \makecell{Median\\reduction (\%) $\uparrow$}
& \makecell{Empty\\rate (\%) $\downarrow$} \\
\midrule
\multirow{4}{*}{Colors}
& $A$       & $90.2\%$ & $4$ & $1$ & $95.8\%$ & $98.9\%$ & $9.0\%$ \\
& $L^{sym}$ & $90.4\%$ & $4$ & $1$ & $95.8\%$ & $98.9\%$ & $8.5\%$ \\
& $L$       & $90.8\%$ & $4$ & $1$ & $95.5\%$ & $98.7\%$ & $8.3\%$ \\
& $SP$      & $90.1\%$ & $205$ & $223$ & $2.2\%$ & $0.0\%$ & $2.6\%$ \\
\bottomrule
\end{tabular}
\end{table*}

\begin{table*}[th]
\centering
\caption{$G_0$ prior impact on different tasks. The colors task uses the adjacency cost matrix. Note that uniform $G_0$ prior often converges very slowly.}
\label{tab:cost_matrix_powers}
\begin{tabular}{llcccccc}
\toprule
Task
& $G_0$
& \makecell{Empirical \\ Coverage}
& \makecell{Mean\\set size $\downarrow$}
& \makecell{Median\\set size $\downarrow$}
& \makecell{Mean\\reduction (\%) $\uparrow$}
& \makecell{Median\\reduction (\%) $\uparrow$}
& \makecell{Empty\\rate (\%) $\downarrow$} \\
\midrule

\multirow{5}{*}{Colors}
& LFD     & $90.4\%$ & $4$ & $1$ & $95.8\%$ & $98.9\%$ & $8.7\%$ \\
& FD      & $90.2\%$ & $4$ & $1$ & $95.8\%$ & $98.9\%$ & $9.0\%$ \\
& Uniform & $90.1\%$ & $4$ & $1$ & $95.8\%$ & $98.9\%$ & $8.7\%$ \\
& LFD-sym & $90.0\%$ & $4$ & $1$ & $95.8\%$ & $98.9\%$ & $8.8\%$ \\
& Id      & $90.4\%$ & $6$ & $2$ & $94.4\%$ & $97.9\%$ & $5.1\%$ \\
\bottomrule
\end{tabular}
\end{table*}

\end{document}